\newcommand{\ignore}[1]{}
\theoremstyle{plain}
\theoremstyle{definition}
\newtheorem{thm}{Theorem}[section]
\newtheorem{lem}[thm]{Lemma}
\theoremstyle{definition}
\theoremstyle{remark}
\def\ignore#1{}
\def\onedot{.}
\def\eg{\emph{e.g}\onedot~} 
\def\ie{\emph{i.e}\onedot~}
\def\etal{\emph{et al}~}
\begin{document}


\title{Overlapping Cover Local Regression Machines }



\author{Mohamed Elhoseiny    \and
        Ahmed Elgammal     
}

\authorrunning{M Elhoseiny et al.}

\institute{Mohamed Elhoseiny$^{1}$ and Ahmed Elgammal$^{2}$ \at
	       $^{1}$Facebook AI Research\\
	       $^{2}$Department of Computer Science, Rutgers University\\
              \email{elhoseiny@fb.com, elgammal@cs.rutgers.edu}           
}

\date{Received: date / Accepted: date}

\maketitle

\begin{abstract}

We present the Overlapping Domain Cover (ODC) notion for kernel machines, as a set of overlapping subsets of the data that covers the entire training set and optimized to be spatially cohesive as possible. We show how this notion benefit the speed of local kernel machines for regression in terms of both speed while achieving while minimizing the prediction error. We propose an efficient ODC framework, which is applicable to various regression models and in particular reduces the complexity of Twin Gaussian Processes (TGP) regression from cubic to quadratic. Our notion is also applicable to several kernel methods (\eg   Gaussian Process Regression(GPR) and IWTGP regression, as shown in our experiments). We also theoretically justified the idea behind our method to improve local prediction by the overlapping cover. We validated and analyzed our method on three benchmark human pose estimation datasets and interesting findings are discussed.  


\end{abstract}
\section{Introduction}
\label{sec:1}

\ignore{One of the main problems, in many computer vision application, is to estimate a continuous real-valued function or a structured-output function from input features.} Estimation of a continuous real-valued or a structured-output function from input features is one of the critical problems that appears  in many machine learning  applications. Examples include predicting the joint angles of the human body from images, head pose, object viewpoint, illumination direction, and a person's age and gender. Typically, these problems are formulated by a regression model.  Recent advances in structure regression encouraged researchers to adopt it for formulating various problems with high-dimensional output spaces, such as segmentation, detection, and image reconstruction, as regression problems. However, \ignore{they are constrained by }the computational complexity of the state-of-the-art regression algorithms limits their applicability for big data\ignore{ that are used to compute the predictions}. In particular,  kernel-based regression algorithms such as Ridge Regression~\cite{ridgeReg70}, Gaussian Process Regression (GPR)~\cite{Rasmussen:2005}, and the Twin Gaussian Processes (TGP)~\cite{Bo:2010} require inversion of kernel matrices ($O(N^3)$, where $N$ is the number of the training points), which limits their applicability for big data. We refer to these non-scalable versions of GPR and TGP as full-GPR and full-TGP, respectively.

Khandekar et. al. \cite{khandekar2014advantage} discussed properties and  benefits of overlapping clusters for minimizing the conductance from spectral perspective. These properties of overlapping clusters also motivate studying scalable local prediction based on overlapping kernel machines.  Figure~\ref{fig:overlapandnonoverlap_ex} illustrates the notion by starting from a set of points, diving them into either disjoint and overlapping subsets, and finally learning a kernel prediction function on each (i.e., $f_i(x^*)$ for subset $i$, $x^*$ is testing point). \ignore{Our method is accurate and scalable as demonstrated in the experiments on 3-datasets (including Human3.6M, the largest human-3D-pose dataset).}
\textit{In summary, the main question, we  address in this paper, is how local kernel machines with overlapping training data could help speedup the computations and gain accurate predictions. We achieved considerable speedup and good performance on GPR, TGP, and IWTGP (Importance Weighted TGP) applied to 3D pose estimation datasets\ignore{; our framework firstly achieves quadratic prediction complexity for TGPs}. To the best of our knowledge, our framework is the first to achieve quadratic prediction complexity for TGP. The ODC concept is also novel in the context of kernel machines and is shown here to be successfully applicable to multiple kernel-machines\ignore{, resulting in a considerable speedup and an accurate prediction}}. We studies in this work GPR and TGP and IWTGP  (a third model) kernel machines\ignore{ (denoted by SM in the rest of the paper)}. The remainder of this paper is organized as follows: Section ~\ref{sec:2}  and~\ref{sec:relappmethod} presents some motivating kernel machines and the related work. Section ~\ref{sec:3} presents our approach and a theoretical justification for our ODC concept. Section ~\ref{sec:6} and \ref{sec:7} presents our experimental validation and conclusion.

\begin{figure*}[t]
\centering
\includegraphics[width=0.31\textwidth]{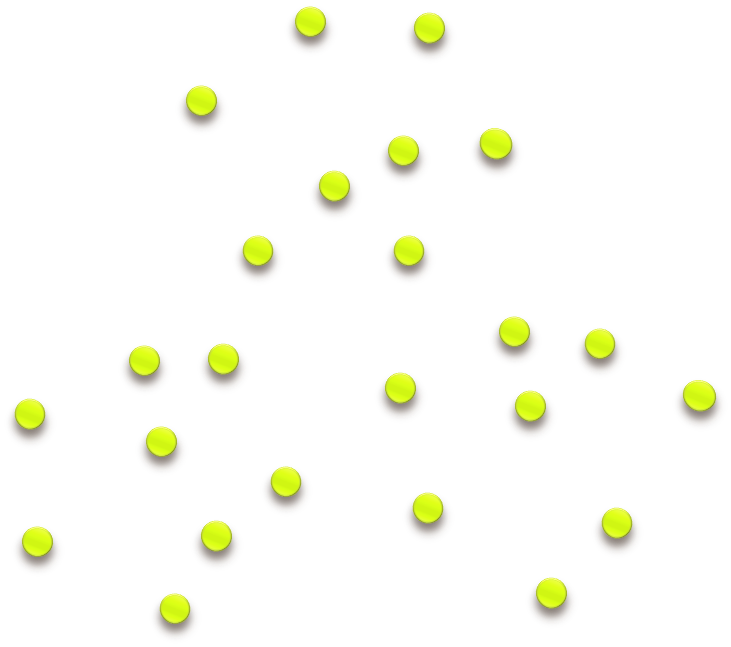}
\includegraphics[width=0.31\textwidth]{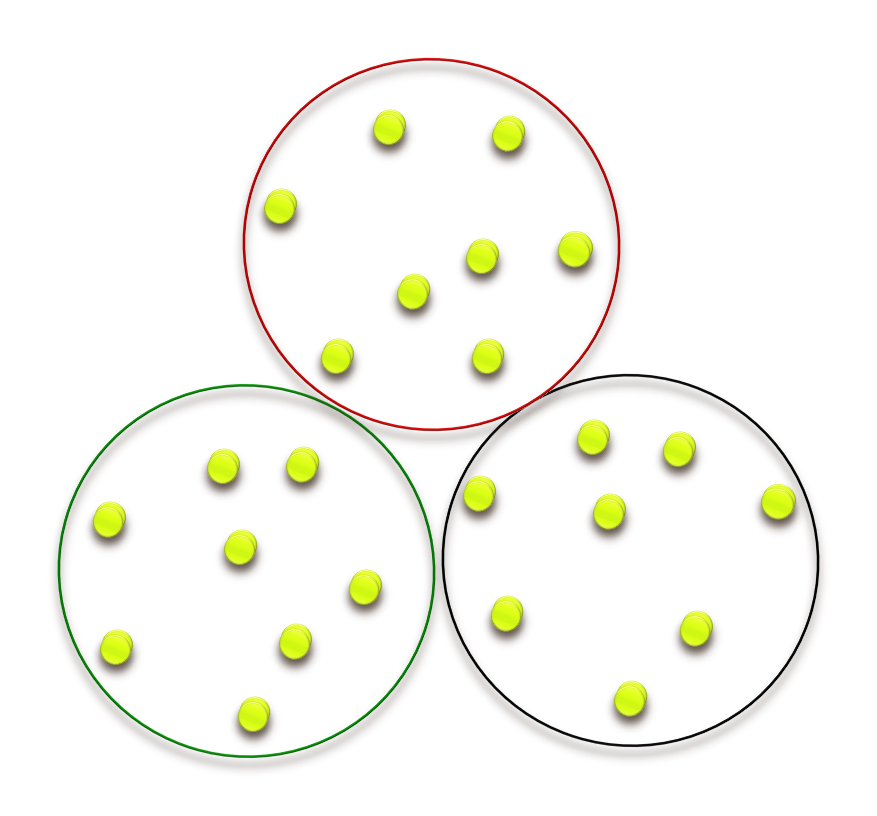}
\includegraphics[width=0.31\textwidth]{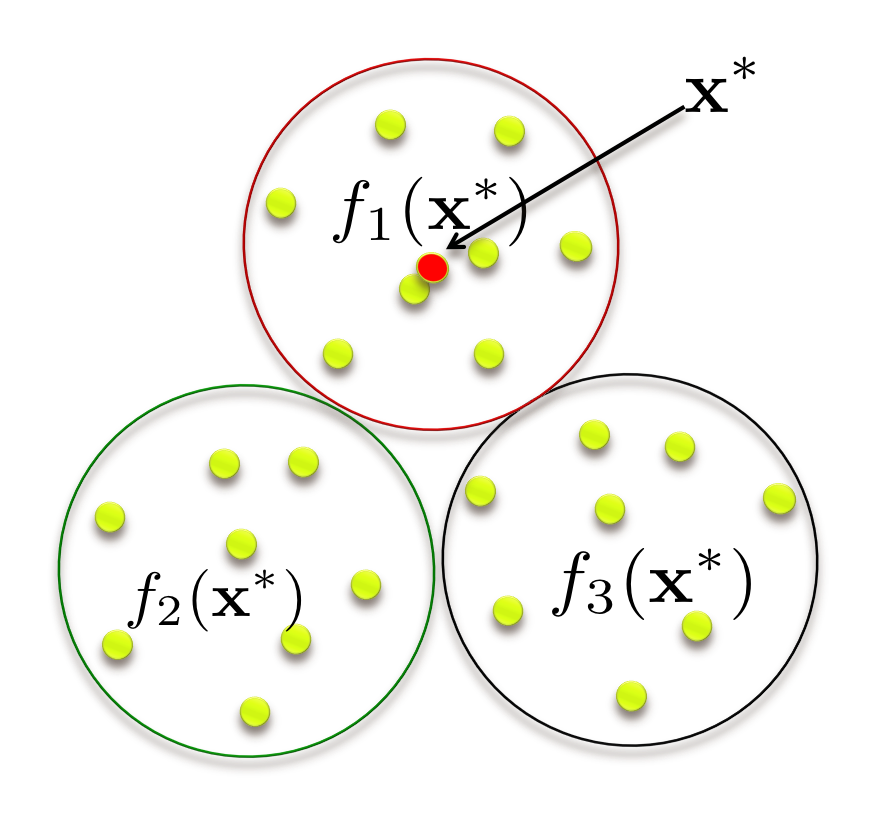}
\includegraphics[width=0.62\textwidth]{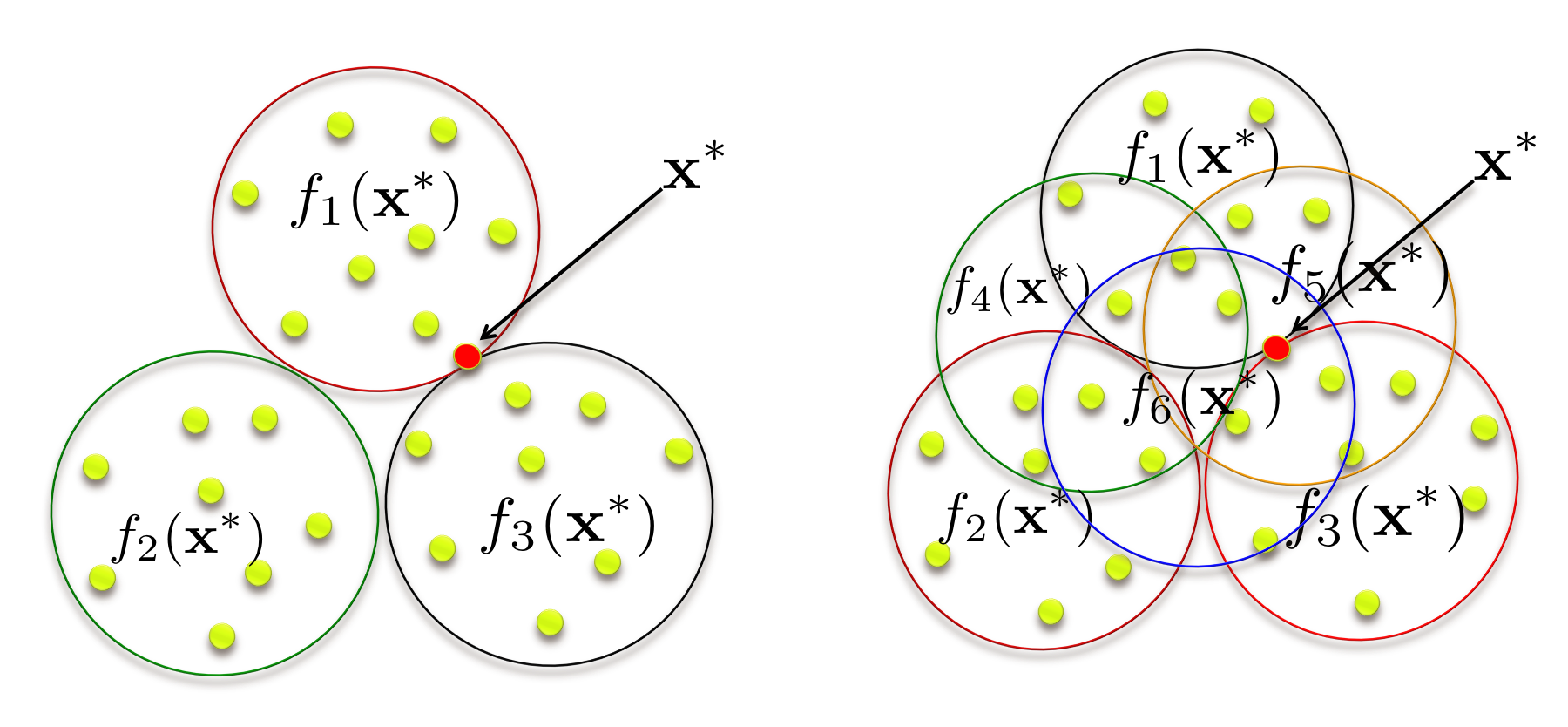}
\caption{\textbf{Top}: Left:24 points, Middle: Overlapping Cover, Right: disjoint kernel machines of 8  points (evaluating $x^*$ near a middle of a kernel machine).  \textbf{Bottom}: Left: disjoint kernel machine evaluation on boundary),  Right: 6 Overlapping kernel machines of 8 points.  $f_i(\mathbf{x}^*)$ is the $i^{th}$ kernel machine prediction for $\mathbf{x}^*$ test point.}
\label{fig:overlapandnonoverlap_ex}
\vspace{-3mm}
\end{figure*}

\section{Background on Full GPR and TGP Models}
\label{sec:2}
In this section, we show example kernel machines that motivated us to propose the ODC framework to improve their performance and scalability.   Specifically, we review GPR for single output regression, and TGP for structured output regression.\ignore{ Specifically, we review two powerful machines for single output (Gaussian Process Regression(GPR)) and structured output regression (Twin Gaussian Processes(TGP)).Then, we present an importance weighting algorithm to resolve data bias under Twin-Gaussian Processes (IWTGP)} We selected GPR and TGP kernel machines for their increasing interest and impact. However, our framework is not  restricted to them\ignore{ as it depends on the overlapping domain cover notion that could be generally adopted}. \ignore{We conclude this section by showing practical limitation of these machines that inspires our method. }


\textbf{GPR}~\cite{Rasmussen:2005}  assumes a linear model in the kernel space with Gaussian noise in a single-valued output, i.e., $y = f(\textbf{x})  + \mathcal{N} (0, \sigma_{n}^2)$, where $\textbf{x} \in \mathbb{R}^{d_X}$ and ${y} \in \mathbb{R}$. Given a training set $\lbrace \textbf{x}_i, {y}_i , i =1:N \rbrace$, the posterior distribution of $y$ given a test point $\textbf{x}_{*}$ is:   \ignore{dimension $i$ of the classifier to be predicted ($\textbf{y}_* \in R^{d_Y}$)}
\begin{equation}
\small
\begin{split}
p(y|\textbf{x}_*)=   \mathcal{N} (&\boldsymbol{\mu}_{y} =  \textbf{k}{(\textbf{x}_*)}^\top (\textbf{K} + \sigma^2_{n} \textbf{I})^{-1} \mathbf{f},   \\  &  \sigma_{y}^2 = k({\textbf{x}_*, \textbf{x}_*}) -\textbf{k}{(\textbf{x}_*)}^\top (\textbf{K} + \sigma^2_{n} \textbf{I})^{-1} \textbf{k}{(\textbf{x}_*)})
\end{split}
\label{eq:gprpred}
\end{equation}
where $k(\textbf{x},\textbf{x}')$ is kernel defined in the input space, $\mathbf{K}$ is an $N \times N$ matrix, such that $\mathbf{K} (l,m) = k(\textbf{x}_l, \textbf{x}_m)$, $\textbf{k}{(\textbf{x}_*)}  = [k(\textbf{x}_*,\textbf{x}_1),, ..., {k}(\textbf{x}_*,\textbf{x}_{N})]^\top$, $\textbf{I}$ is an  identity matrix of size $N$, $\sigma_{n}$ is the variance of the measurement noise, $\mathbf{f} = [y_1,\cdots  $ $, y_N]^\top$. GPR could predict structured output $\textbf{y} \in \mathbb{R}^{d_Y}$ by training a GPR  model for each dimension\ignore{ $i = 1 : d_Y$}. However, this indicates that GPR does not capture dependency between output dimensions which limit its performance. 

\textbf{TGP}~\cite{Bo:2010} encodes the relation between both inputs and outputs using GP priors. This was achieved by minimizing the Kullback-Leibler divergence between the marginal GP of outputs (e.g., poses) and observations (e.g., features)\ignore{; we refer the reader to~\cite{Bo:2010} for derivation}. Hence,  TGP prediction is given by:\ignore{ the estimated pose in TGP is given as the solution of the following optimization problem:}
\begin{equation}
\small
\begin{split}
\hat{\textbf{y}}(\textbf{x}_*) =  &\underset{\textbf{y}}{\operatorname{argmin       }}[  k_Y(\textbf{y},\textbf{y})  -2 \textbf{k}_Y(\textbf{y})^\top (\textbf{K}_X + \lambda_X  \textbf{I})^{-1} \textbf{k}_X(\textbf{x}_*) \\&- \eta  log (k_Y(\textbf{y},\textbf{y})  -\textbf{k}_Y(y)^\top ({\textbf{K}_Y}+ \lambda_Y \textbf{I})^{-1} {\textbf{k}_Y(\textbf{y})} ) ]
\end{split}
\label{eq:tgp}
\vspace{-5mm}
\end{equation}
where $\eta  = k_X(\textbf{x}_*,\textbf{x}_*) -\textbf{k}_X(\textbf{x}_*)^\top  (\textbf{K}_X + \lambda_X  \textbf{I})^{-1} \textbf{k}_X($ $\textbf{x}_*)$, $k_X(\textbf{x},\textbf{x}') = exp(\frac{- \|\textbf{x}-\textbf{x}' \|}{2 \rho_x^2})$ and $k_Y(\textbf{y},\textbf{y}')$ $= exp(\frac{- \|\textbf{y}-\textbf{y}' \|}{2 \rho_y^2})$ are Gaussian kernel functions for input feature $\textbf{x}$ and output vector $\textbf{y}$, $\rho_x$ and $\rho_y$ are the kernel bandwidths for the input and the output  . $\textbf{k}_Y(\textbf{y}) = [k_Y(\textbf{y},\textbf{y}_1), ..., k_Y(\textbf{y},\textbf{y}_{N})]^\top$, where $N$ is the number of the training examples. $\textbf{k}_X(\textbf{x}_*) = [k_X(\textbf{x}_*,\textbf{x}_1), ...,$ ${k}_X(\textbf{x}_*,\textbf{x}_{N})]^\top$, and $\lambda_X$ and $\lambda_Y$ are regularization parameters to avoid overfitting. This optimization problem can be solved using a quasi-Newton optimizer with cubic polynomial line search \ignore{for optimal step size selection}~\cite{Bo:2010}; we denote the number of steps to convergence as $l_2$.

\begin{table*}[t!]
\centering
\caption{Comparison of computational Complexity of training and testing for each of Full, NN (Nearest Neighbor), FITC, Local-RPC, and our ODC. Training is the time include all computations that does not depend on test data, which includes clustering in some of these methods. Testing includes computations only needed for prediction\ignore{performed at test time, i.e., Mean $\textbf{Y}$ and variance for GPR kernel machine and the prediction $\textbf{Y}$ for TGP kernel machine}}
 \label{tab:thcomp}
  \scalebox{0.68}{
    \begin{tabular}{|l|ccc|cc|c|}
    \hline
          &  \multicolumn{3}{c|}{\textbf{Training for GPR and TGP}}    & \multicolumn{3}{|c|}{\textbf{Testing for each point} }  \\ \hline
          & \textbf{Ekmeans Clustering} & \textbf{RPC Clustering}   &    \textbf{Model training}   & \textbf{GPR-Y} & \textbf{GPR-Var} & \textbf{TGP-Y} \\ \hline 
    \textbf{Full} & -     & -     & $O(N^3 + N^2 d_X)$    & $O(N \cdot (d_X +d_Y)$ & $O(N^2 \cdot d_Y)$ & $O(l_2 \cdot N^2 \cdot d_Y)$ \\
    \textbf{NN {\cite{Bo:2010}}} & -     & -     & -     & $O(M^3 \cdot d_Y)$ & $O(M^3 \cdot d_Y)$ & $O(M^3 + l_2 \cdot M^2 \cdot d_Y)$ \\
    \textbf{FIC (GPR only, $d_Y=1$ {~\cite{fic06}})} & - & - & $O(M^2 \cdot ( {N} + d_X))$  & $O(M \cdot d_X)$  & $O(M^2)$ & -  \\
    \textbf{Local-RPC (only GPR, $d_Y=1${ \cite{Chalupka:2013}})} & - & $N \cdot log(\frac{N}{M})$ & $O(M^2 \cdot ( {N} + d_X) )$  & $O(M \cdot d_X)$ & $O(M^2)$ & -  \\
     \textbf{ODC (our framework)} & $O(N \cdot \frac{N}{(1-p) M} \cdot d_X \cdot l_1)$ & $O(N \cdot log(\frac{N}{(1-p) M}) \cdot d_X)$ & O($M^2 \cdot(\frac{N}{1-p} + d_X)$)  & $O(K' \cdot M \cdot( d_X + d_Y))$ & $O(K' \cdot M^2 \cdot d_Y)$ & $O(l_2 \cdot K' \cdot M^2 \cdot d_Y)$ \\ \hline 
    \end{tabular}}%
      \vspace{-2mm}
\end{table*}%

\section{Importance Weighted Twin Gaussian Processes (IWTGP)}
\label{ss:cstgp}

Yamada et al~\cite{Yamada:2012} proposed the importance-weighted variant of twin Gaussian processes \cite{Bo:2010} called IWTGP.  The weights are calculated using RuLSIF \cite{YamadaSKHS11} (relative unconstrained least-squares importance fitting). The weights were modeled as $w_{\alpha}(\textbf{x},\boldsymbol{\theta}) = \sum_{l=1}^{n_{te}} \theta_l k(\textbf{x}, \textbf{x}_l)$ to minimize $E_{p_{te}(x)} [\,(w_{\alpha}(x,\mathbf{\theta})-w_{\alpha}(\textbf{x}))^2\,]$. where $k(\textbf{x},\textbf{x}_l) = exp(-\frac{\|\textbf{x}-\textbf{x}_l\|}{2 \tau^2})\,\,\,$, $w_{\alpha}(\textbf{x}) =\,\,\,$ $\frac{p_{te}(\textbf{x})}{(1-\alpha) p_{te}(\textbf{x}) +\alpha p_{tr}(\textbf{x})}$, $0\leq\alpha \leq 1$. To cope with this instability issue, setting $\alpha$ to $0 \le\alpha \le 1$ is practically useful for stabilizing the covariate shift adaptation, even though it cannot give an unbiased model under covariate shift \cite{YamadaSKHS11}. According  \cite{Yamada:2012} the optimal $\boldsymbol{\hat{\theta}}$ vector is computed in a closed form solution as follows.to
\begin{equation}
\boldsymbol{\hat{\theta}}= ({\hat{\textbf{H}}} + \nu \textbf{I} )^{-1} {\hat{\textbf{h}}}
\end{equation} 
where  $\hat{\textbf{H}}_{l,l'} = \frac{1-\alpha}{n_{te}}  \sum_{i=1}^{n_{te}} k(\textbf{x}_i^{te}, \textbf{x}_l^{te} k(\textbf{x}_i^{te},\textbf{x}_{l'}^{te}) + $ $\frac{\alpha}{n_{tr}} \sum_{j=1}^{n_{tr}}  $ $ k(\textbf{x}_j^{tr}, \textbf{x}_l^{te} k(\textbf{x}_j^{tr},\textbf{x}_{l'}^{te})$, $\hat{\textbf{h}}$ is an $n_{te}$- dimensional vector with the $l^{th}$ element $\hat{\textbf{h}}_l = \frac{1}{n_{te}} \sum_{i=1}^{n_{te}} k(\textbf{x}_i^{te}, \textbf{x}_l^{te})$, $\textbf{I}$ is an $n_{te}\times n_{te}$-dimensional identity matrix. where $n_{te}$ and $n_{tr}$ and the number of testing and training points respectively. Model selection of RuLSIF is based on cross-validation with respect to the squared-error criterion $J$ in \cite{YamadaSKHS11}. Having computed $\boldsymbol{\hat{\theta}}$, each input and output examples are simply re-weighted by $w_{\alpha}^{\frac{1}{2}}$ \cite{Yamada:2012}. Therefore, the output of the importance
weighted TGP (IWTGP) is given by
\begin{equation}
\begin{split}
\hat{y} =  \underset{y}{\operatorname{argmin       }}[ & K_Y(\textbf{y},\textbf{y}) -2 k_y(\textbf{y})^T \textbf{u}_w - \eta_w  log (K_Y(\textbf{y},\textbf{y}) -\\& k_y(\textbf{y})^T \textbf{W}^\frac{1}{2} (\textbf{W}^\frac{1}{2} \textbf{K}_Y \textbf{W}^\frac{1}{2} +  \lambda_y I)^{-1} \textbf{W}^\frac{1}{2} k_y(\textbf{y}) ) ]
\end{split}
\label{eq:IWTGP}
\end{equation}
where $\textbf{u}_w = \textbf{W}^\frac{1}{2}  (\textbf{W}^\frac{1}{2} \textbf{K}_X \textbf{W}^\frac{1}{2} + \lambda_x I)^{-1} \textbf{W}^\frac{1}{2} k_x(\textbf{x})$, $\eta_w = k_X(\textbf{x},\textbf{x}) - k_x(\textbf{x})^T \textbf{u}_w$. Similar to TGP, IWTGP can also be solved using a second order, BFGS quasi-Newton optimizer with cubic polynomial line search for optimal step size selection.

 Table~\ref{tab:thcomp} shows the training an testing complexity of full GPR and TGP models, where $d_Y$ is the dimensionality of the output. Table~\ref{tab:thcomp} also summarizes the computational complexity of the related approximation methods, discussed in the following section, and our method. N\ignore{The hyper-parameters of both GPR and TGP were trained by cross validation, similar to ~\cite{Bo:2010}; the SM include the learnt hyper-parameters.}\ignore{Yamada et al \cite{Yamada:2012} proposed the importance-weighted variant of twin Gaussian processes \cite{Bo:2010} called IWTGP.  The weights are calculated using RuLSIF \cite{YamadaSKHS11}. There are more details about using this approach in our framework in the attached SM. }.  

\section{Related Work on Approximation  Methods}
\label{sec:relappmethod}

Various approximation approaches have  been presented to reduce the computational complexity in the context of GPR. As detailed in \cite{park11}, approximation methods on Gaussian Processes may be categorized into three trends: matrix approximation,
likelihood approximation, and localized regression. The matrix approximation trend is inspired by the observation that the kernel matrix inversion is the major part of the expensive computation, and thus, approximating the matrix by a lower rank version, $M \ll N$\ignore{,   helps reduce the computational demand} (e.g.,  Nystr\"{o}m  Method~\cite{Nystrom01}). While this approach reduces the computational complexity from $O(N^3)$ to $O(N M^2)$ for training, there is no guarantee on the non-negativity of the predictive variance~\cite{Rasmussen:2005}. In the second trend,  likelihood approximation is performed on testing and training examples, given $M$ artificial examples known as inducing inputs, selected from the training set (e.g.,  Deterministic Training Conditional (DTC)~\cite{DTC03}, Full Independent conditional (FIC)~\cite{fic06}, Partial Independent Conditional (PIC)~\cite{pic07}). The drawback of this trend is \ignore{that it deals with }the dilemma of selecting $M$ inducing points, which might be distant from the test point, resulting in a performance decay; see Table~\ref{tab:thcomp} for the complexity of FIC.

\ignore{However, } A third trend, localized regression, is based on the belief that distant observations are almost unrelated\ignore{, and adopted in our work}. The prediction of a test point is achieved through its $M$ nearest points\ignore{ from the training set}. One technique to implement this notion is through decomposing the training points into disjoint clusters during training, where prediction functions are learned for  each of them~\cite{park11}. At test time, the prediction function of the closest
cluster\ignore{, where the test point belongs, } is used to predict the corresponding output. While this method is efficient\ignore{ and adaptive to non-stationary change}, it introduces discontinuity problems on boundaries of the subdomains. Another way to implement local regression is through Mixture of Experts (MoE) as an Ensemble method to make prediction based on computing the final output by combining outputs of local predictors called experts (see a study on MoE methods~\cite{tymoe12}). Examples include Bayesian committee
machine (BCM~\cite{BCM00}), local probabilistic regression (LPR~\cite{LPR08}), mixture of Tree of Gaussian Processes (GPs)~\cite{TreeGPs07}, and
Mixture of GPs~\cite{Rasmussen:2005}. While these approaches overcome the discontinuity problem by the combination mechanism, they suffer from intensive complexity at test time, which limits its applicability in large-scale setting\ignore{. One more computational aspect is that mixture models, as }, e.g., Tree of GPs and Mixture of GPs, involve complicated integration, approximated by computationally expensive sampling or Monte Carlo simulation. 

\ignore{
Park  et al~\citet{park11} proposed an approach for fast computation of GPR with a focus on large spatial data sets. The approach decomposes the domain of a single output regression function into small subdomains, inspired by a domain decomposition  approach for solving  Partial differential Equations (PDEs). It then infers a local regression function for each subdomain. In contrast to prior methods, this approach is easier to parallelize. However, it was mainly based on building consistent boundary value functions between subdomains on  a regular grid. Hence, their approach was evaluated on data of maximum input dimension $2$.}

Park etal.~\cite{park11} proposed a large-scale approach for  GPR by domain decomposition on up to 2D grid on input, where a local regression function is inferred for each subdomain such that they are consistent on boundaries.\ignore{The approach model the problem by solving Partial Differential Equations to maximize consistent prediction on the boundaries of sudomains, defined on a regular grid (upto 2D grid).} This approach obviously lacks a solution to high-dimensional input data because the size of the grid increases exponentially with the dimensions, which limits its applicability\ignore{makes the approach unapplicable in is not applicable to many applications}. More recently,\ignore{  Chalupka et al}~\cite{Chalupka:2013} proposed a Recursive Partitioning Scheme (RPC) to decompose the data into non-overlapping equal-size clusters, and they built a GPR on each cluster\ignore{; we denote this method as Local-RPC}. They showed that this local scheme gives better performance  than FIC~\cite{fic06} and other methods\ignore{ they compared with, in the time regimes as it operates}. However, this partitioning scheme obviously lacks consistency on the boundaries of the partitions and it was restricted to single-output GPR\ignore{, which limits its applicability in structured regression problems, e.g., human pose estimation}. Table~\ref{tab:thcomp} shows the complexity of this scheme denoted by local-RPC for GPR.

Beyond GPR, we found that local regression was adopted differently in structured regression models like Twin Gaussian Processes (TGP)~\cite{Bo:2010}, and also an data bias version of it, denoted by  IWTGP~\cite{Yamada:2012}.\ignore{ which deal with the data bias problem.In contrast to GPR, these models captures dependency between output dimensions, which leads to  better predictions in the Human 3D pose estimation task. } TGP and IWTGP outperform not only GPR in this task, but also various regression models including Hilbert Schmidt Independence Criterion (HSIC)~\cite{HSIC05}, Kernel Target Alignment (KTA)~\cite{KTA01}, and Weighted-KNN~\cite{Rasmussen:2005}. Both TGP and IWTGP have no closed-form expression for prediction. Hence, the prediction is made by gradient descent on a function that needs to compute the inverse of both the input and output kernel matrices, $O(N^3)$ complexity. Practically,  both approaches have been applied by finding the $M \ll N$ Nearest-Neighbors (NN) of each test  point in~\cite{Bo:2010} and~\cite{Yamada:2012}. The prediction of a test point is  $O(M^3)$ due to the inversion of $M \times M$ input and output kernel Matrices. \ignore{While this NN local regression scheme is one way to tackle the performance problem;} However, NN scheme has three drawbacks: (1) A regression model is computed for each test  point, which results in a scalability  problems in prediction (i.e., Matrix inversions on the NN of each each test point), (2) Number of neighbors might not be large enough to create an accurate prediction model since it is constrained by the first drawback, (3) It is inefficient compared with the other schemes used for GPR. Table~\ref{tab:thcomp} shows the complexity of this NN scheme\ignore{ denoted by NN}.
\ignore{
These drawbacks are overcome as a consequence of the six desirable properties. }

\section{ODC Framework}%
\label{sec:3}

\begin{figure*}
\centering
\includegraphics[width=1.0\textwidth]{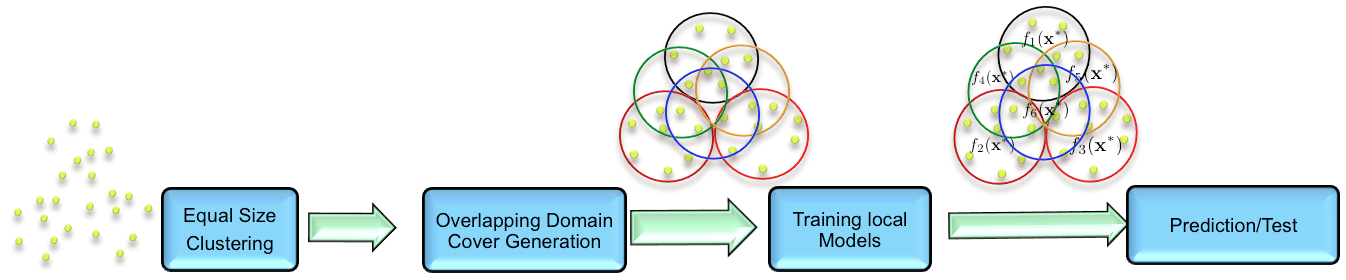}
\caption{ODC Framework}
\end{figure*}

The problems of the existing approaches, presented 
above, motivated us to develop an approach that  satisfies the properties listed in  table~\ref{tbl:contr}.\ignore{(a) accurate, (b) efficient, (c) scalable to arbitrary input dimension and also consider structured output, (d) consistent on boundary predictions (i.e., no discontinuity problem) (e) applicable to various kernel machines, and (f) easy to parallelize; } The table also shows which of these properties are satisfied for the relevant methods.\ignore{For example, looking at Park's approach \cite{park11},  it satisfies properties (a), (d), and
(f) but not properties (b) and (c).\ignore{ Property (b) is satisfied when the approach is applied to small dimensional input. Hence, property (c) is not satisfied.} Similarly, the likelihood approximation methods like FIC~\cite{fic06} and PIC~\cite{pic07}),  have properties (e) and (c) not satisfied, and (d) not applicable. Chalupka et al~\cite{Chalupka:2013}'s local method has properties (a), (c) and (d) not satisfied. Finally, the  NN-local regression scheme does not satisfy (b).}
 \ignore{
 GPR, TGP and IWTGP require inversion of $N \times N$ matrices, so the complexity of the solution is $O(N^3)$ which is impractical when $N$ is large. However, some approaches was designed to resolve this problem in GPR (e.g. FIC \cite{fic06}, PIC \cite{pic07}), that suffer from difficulty of parallelism. Furthermore, they were prominently addressed in GPR, which predict single dimension output and has been outperformed by structured regression approached like TGP. Practically, TGP \cite{Bo:2010} approach has been applied by finding the $N' << N$ nearest neighbors.  Then the prediction is done by minimizing equation ~\ref{eq:tgp} in $O(N'^3)$ operations (because of the inversion of $N' \times N'$ matrices). \ignore{$N'$ was chosen to be ($N' \approx 800$)}.  Similarity, IWTGP \cite{Yamada:2012} requires matrix inversions of $N' \times N'$ matrices, the complexity of solving equation ~\ref{eq:IWTGP}. IWTGP also includes the estimation of relative importance weight, thus the total complexity of IWTGP is  $O({N'}^3)+ O({N'}_{tst}^3)$. However computing  the nearest neighbors is one way to tackle the performance problem but it has three drawbacks. (1) A TGP /IWTGP model is computed for each query point $x$, which results in a scalability  problems in prediction (i.e. Matrix inversions on the nearest neighbor of each each test point) (2) Number of neighbors might not be large enough to create an accurate prediction model since it is constrained by the first drawback. These drawbacks are overcome as a consequence of the six desirable properties (see section ~\ref{sec:1}), which are  satisfied by our overlapping set cover framework. We denote this nearest neighbor variants of GPR, TGP and IWTGP as GPR-KNN, TGP-KNN, and IWTGP-KNN respectively.} In order to satisfy all the properties, we \ignore{\begin{wraptable}{r}{7.7cm} \vspace{-4mm}\caption{Contrast against most relevant methods}
 \vspace{3mm}
  \label{tbl:contr}
\scalebox{0.53}
{
\begin{tabular}{|l|c|c|c|c|}
\hline 
  &\cite{park11} & FIC/PIC~\cite{fic06}& NN~\cite{Bo:2010} & ODC \\ 
\hline 
Accurate & \begin{tabular}{@{}c@{}}No for  high  \\ input dimension \end{tabular}  & No & Yes& Yes \\ 
\hline 
Efficient & No & Yes & No &  Yes\\ 
\hline 
\begin{tabular}{@{}c@{}}Scalable to arbitrary \\ input dimension \end{tabular}  & No (2D) & Yes & Yes& Yes \\ 
\hline 
Consistent on Boundaries & Yes & No & Yes & Yes \\ 
\hline 
supported kernel machines &GPR& GPR & TGP & GPR, TGP, IWTGP and others\\ 
\hline 
Easy to parallelize & No & No & Yes& Yes \\ 
\hline 
\end{tabular} 
}\vspace{-3mm}\end{wraptable}} present the Overlapping Domain Cover (ODC) notion. We define the ODC as a collection of overlapping subsets of the training points, denoted by  subdomains,  such that they are as spatially coherent as possible.\ignore{In order to achieve this objective, we model our framework as follows} During training, an ODC is computed such that each subdomain overlaps with the neighboring subdomains. Then, a  local prediction model (kernel machine) is \ignore{ \begin{wrapfigure}{r}{0.45\textwidth}
  \vspace{-12pt}
    \includegraphics[width=0.45\textwidth]{systemArch3.png}
  \caption{Overlapping Domain Cover Framework, applied to 3D pose instances taken from Human Eva dataset \cite{SigalBB10} }
  \label{fig:DDTGP}
  \vspace{-15pt}
\end{wrapfigure}} created for each subdomain and the computations that does not depend on the test data are factored out and precomputed (e.g. inversion of matrices). The nature of the ODC generation makes these kernel machines consistent in the overlapped regions, which are the boundaries since we constraint the subdomains to be coherent. This is motivated by the notion that data lives on a manifold with local properties and consistent connections between its neighboring regions. On prediction, the output is calculated as a reduction function of the predictions on the closed subdomain(s). Table~\ref{tab:thcomp}\ignore{, we show the computational complexity on testing and training including clustering if applicable using our parametrization and notations, i.e., as a function of $M$, $N$, $d_X$, and $d_Y$. The first row  shows the computational complexity for using full data and NN local regression scheme,  applied to either TGP or GPR. The third and the fourth rows show the complexity of FIC and Local-RPC applied to GPR in ~ and  ~\cite{Chalupka:2013} respectively.} ( the last row) shows the complexity for our generalized ODC framework, detailed in Sec~\ref{sec:4} and ~\ref{sec:pred}. In contrast to the prior work, our ODC framework is designed to cover structured regression setting, $d_Y>1$ and to be applicable to GPR, TGP, and many other models.\ignore{Having computed the ODC, a local kernel machine is trained on each member of the cover, denoted as subdomains. At test time, the  local kernel machines of the closest subdomains to the test point are retrieved and the final output is computed as a combination of their predictions.}

\begin{table}[t!] 
\caption{Contrast against most relevant methods}
  \label{tbl:contr}
\scalebox{0.68}
{
\begin{tabular}{|l|c|c|c|c|}
\hline 
  &\cite{park11} & FIC/PIC~\cite{fic06}& NN~\cite{Bo:2010} & ODC \\ 
\hline 
Accurate & \begin{tabular}{@{}c@{}}No for  high  \\ input dimension \end{tabular}  & No & Yes& Yes \\ 
\hline 
Efficient & No & Yes & No &  Yes\\ 
\hline 
\begin{tabular}{@{}c@{}}Scalable to arbitrary \\ input dimension \end{tabular}  & No (2D) & Yes & Yes& Yes \\ 
\hline 
Consistent on Boundaries & Yes & No & Yes & Yes \\ 
\hline 
supported kernel machines &GPR& GPR & TGP & GPR, TGP, IWTGP and others\\ 
\hline 
Easy to parallelize & No & No & Yes& Yes \\ 
\hline 
\end{tabular} 
}\end{table}
\ignore{
The main intuition behind the ODC framework is to } \ignore{ create subdomains with optimized  consistency of prediction on the boundaries and }\ignore{increase the likelihood of finding a local kernel machine trained in a sufficient neighborhood of an arbitrary test point. {The framework is also motivated by the notion that \ignore behind this procedure is that }Data lives on a manifold, which has local properties and consistent connections between its neighboring subdomains. As a result, our framework is to model the problem as finding/learning these subdomains such that they have consistent predictions as possible, which is achieved by building overlapping kernel machines}\ignore{; Figure ~\ref{fig:DDTGP} shows the block diagram of our method, applied to 3D pose estimation as a regression problem.} 

\noindent\textbf{Notations.} Given a set of input data  $X=\{\mathbf{x}_1, \cdots, \mathbf{x}_N  \}$, our prediction framework firstly  generates a set of non-overlapping equal-size partitions,  $C=\{C_1, \cdots, C_K\}$, such that $\cup_{i}{C}_i = X$, $| C_i| = N/K$. Then, the ODC is defined based on them as $\mathcal{D} =  \{ {D}_1, \cdots, {D}_K \}$, such that $|{D}_i| =M\, \forall i$, ${D}_i = C_i \cup O_i, \forall i$\ignore{ is defined based on them, such that  }. $O_i$ a the set of points that overlaps with the other partitions, i.e.,  $O_i  = \{x: x \in  \{\cup_{j \neq i} {C}_j\} \}$,  such that $|O_i| = p \cdot M$, $|C_i| = (1-p) \cdot M$,   $0 \le p \le 1$ is the ratio of points in each overlapping subdomain, $D_i$, that belongs to/overlaps with partitions, other than its own, $C_i$. 

It is important to note that, the ODC could be specified by two parameters, $M$  and $p$, which are the number of points in each subdomain and the ratio of overlap respectively; this is since $K= N/ (1-p) M$.  This parameterization of ODC generation is reasonable for the following reasons. First, $M$ defines the number of points that are used to train each local kernel machine, which controls the performance of the local prediction. Second, given $M$ and that $K= N/ (1-p) M$, $p$  defines how coarse/fine the distribution of kernel machines are.  It is not hard to see that as $p$ goes to $0$, the generated ODC reduces to the set of non-overlapping clusters. Similarly, as $p$ approaches $1 - 1/M$, the ODC reduces to generating a cluster at each point with maximum overlap with other clusters, i.e., $K=N$, $|C_i|=1$, and $|O_i| = M -1$. 
\textit{Our main claim is two fold. First, precomputing local kernel machines (e.g.  GPR, TGP, IWTGP) during training on the ODC significantly increase the speedup on prediction time. Second, given a fixed $M$ and $N$, as $p$ increases, local prediction performance increases, theoretically supported by Lemma~\ref{lemma1}} 
\vspace{-0.5mm}
\begin{lem}
Under ODC notion, as the overlap $p$ increases, the  closer the nearest model to an arbitrary test point and the more likely that model get trained on a big neighborhood of the test point.  
\label{lemma1}
\end{lem}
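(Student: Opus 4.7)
The plan is to split the lemma into its two claims and handle each by a volumetric/counting argument under a mild regularity assumption on the training set. First I would fix $N$ and $M$ and treat $p$ as the only free parameter, recording the fundamental identity $K(p)=N/((1-p)M)$, so that both the number of cores $C_i$ and the number of subdomains $D_i$ are strictly increasing in $p$. I would then assume the training points $X$ are well-distributed on a compact region $\Omega\subset\mathbb{R}^{d_X}$ (density bounded above and below), so that the balanced partition $\{C_i\}$ produced by the equal-size k-means procedure described elsewhere in the paper behaves, up to constants, like a Voronoi decomposition with $K$ cells of diameter $\Theta((\mathrm{vol}(\Omega)/K)^{1/d_X})$.

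For the first claim, let $\hat{\imath}(x^*)$ denote the index of the subdomain whose core center $c_{\hat{\imath}}$ is nearest to an arbitrary test point $x^*$. Standard coverage arguments then give $\mathbb{E}\,\|x^*-c_{\hat{\imath}}\|=\Theta(K(p)^{-1/d_X})$, which is strictly decreasing in $p$. This immediately yields the ``closer nearest model'' half of the statement. For the second claim, the key observation is that $|D_i|=M$ is held fixed, so the ``reach'' (diameter) of $D_i$ around $c_{\hat{\imath}}$ stays on the order of $\Theta((M/N)^{1/d_X}\,\mathrm{diam}(\Omega))$, independently of $K$. Combining the two facts, the radius of the ball around $x^*$ covered by the training points of the selected subdomain $D_{\hat{\imath}}$ is at least $\Theta((M/N)^{1/d_X})-\Theta(K(p)^{-1/d_X})$, and the ratio of this neighborhood radius to the test-point offset $\|x^*-c_{\hat{\imath}}\|$ scales like $(1-p)^{-1/d_X}$, diverging as $p\to 1-1/M$. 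Hence the selected local model is trained on an increasingly large neighborhood centered on $x^*$.

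The main obstacle is making ``big neighborhood of the test point'' precise; I would adopt the quantitative surrogate ``radius of the smallest ball centered at $x^*$ that contains all $M$ points of $D_{\hat{\imath}}$'' and show it is non-increasing in $p$ with high probability. Establishing monotonicity rigorously requires a concentration argument for the nearest-centroid distance (the first claim) together with a coupling that, as $p$ grows, replaces distant overlap points $O_i$ with points drawn from cores closer to $x^*$. The balanced-partition hypothesis built into the ODC construction is what prevents pathological configurations and lets the argument collapse, in the extreme regime $p\to 1-1/M$, to the familiar $M$-nearest-neighbor picture around $x^*$ itself, which makes both conclusions of the lemma intuitively clear.
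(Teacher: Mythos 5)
Your proposal is correct in substance and rests on the same two load-bearing facts as the paper's own argument: the identity $K(p)=N/((1-p)M)$, which makes the number of (spatially cohesive, equal-size) subdomains strictly increasing in $p$ for fixed $N$ and $M$, and the observation that $|D_i|=M$ is held constant, so each local model's footprint does not shrink as the models proliferate. Where you diverge is in the formalization. The paper models each subdomain as a Gaussian $p(\mathbf{x}\,|\,D_i)=\mathcal{N}(\mu_i,\Sigma_i)$, defines the degree to which a test point is ``captured'' by the cover as $p(\mathbf{x}^*)=\max_i p(\mathbf{x}^*\,|\,D_i)$, and argues that for two covers with $p_2>p_1$ (hence $K_2\gg K_1$) the higher-overlap cover yields both a smaller $\min_i\|\mathbf{x}^*-\mu_i\|$ and smaller per-domain covariances, so $p_{ODC_2}(\mathbf{x}^*)\gg p_{ODC_1}(\mathbf{x}^*)$. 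Your route is volumetric rather than likelihood-based: treating the balanced partition as an approximate Voronoi decomposition, you get the explicit rate $\mathbb{E}\|\mathbf{x}^*-c_{\hat{\imath}}\|=\Theta(K(p)^{-1/d_X})$ for the first claim and the fixed reach $\Theta((M/N)^{1/d_X})$ for the second, with the ratio scaling like $(1-p)^{-1/d_X}$. What your version buys is a quantitative, dimension-explicit statement and a cleaner separation of the two conclusions of the lemma; what the paper's version buys is a direct link to the actual subdomain-selection rule used at prediction time ($\arg\max_i p(\mathbf{x}^*\,|\,D_i)$ via the covariance norm), which is why it phrases everything through that functional. Both arguments are heuristic at the same points: the paper asserts $|\Sigma_1|\gg|\Sigma_2|$ from spatial cohesion without proof, and you candidly flag that the monotonicity of your ball-radius surrogate needs a concentration and coupling argument you have not supplied. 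Neither gap is fatal to the lemma as stated, since it is an informal, directional claim, but you should be aware that your proof is no more (and no less) complete than the paper's.
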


\begin{proof}
We start by outlining the main idea behind  the proof, which is directly connected to the fact that $K= N/ (1-p) M$, which indicates that the number of local models increases as $p$ increases given fixed $N$ and $M$.  Under the assumption that the local models are spatially cohesive,  $p \to 1$ theoretically indicates that there is a local model  centered at each point in the space (\ie $K = \infty$). Hence, as $p$ increases, the distribution of the kernel machines is the finest and the more likely a test point to find the closest kernel machines trained on a big neighborhood of it leading to more accurate prediction. Meanwhile, as $p$ goes to $0$, the distribution is the coarsest and the less likely a test point finds, the closest kernel machines, trained on a big neighborhood. 

Let's assume that each kernel machine is defined on $M$ points that are spatially cohesive, covering the space of N points with $\frac{N}{(1-p)M}$. Let's assume that center of the $M$ points in kernel machine $i$ is $\mu_i$, the the Co-variance matrix of these points are $\Sigma_i$. Hence  
\begin{equation}
\begin{split}
p(\textbf{x}|D_i) &=  \mathcal{N}(\mu_i, \Sigma_i) \\
&=  (2 \pi)^{-\frac{d_X}{2}} |\Sigma_i|^{-\frac{1}{2}} e^{-\frac{1}{2} (\textbf{x}-\mu_i)^\mathsf{T} \Sigma_i^{-1}
 (\textbf{x}-\mu_i)}  
\end{split}
\end{equation}
where $\mathcal{N}(\mu_i, \Sigma_i)$ is a normal distribution of mean  $\mu_i$ and Co-variance matrix $\Sigma_i$.

Let's assume that there are two ODCs, $ODC_1$ and $ODC_2$, defined on the same $N$ points, the first one has  overlap $p_1$ and the second one is with overlap $p_2$, such that, $p_2>p_1$. Let's assume that the number  of kernel machines in $ODC_1$ and $ODC_2$ are  $K_1$ and $K_2$, respectively. Hence, 
\begin{equation}
\begin{split}
K_1 = \frac{N}{(1-p_1)M}\,\,\,, \,\,\, K_2 = \frac{N}{(1-p_2)M} 
\end{split}
\end{equation}

Since $p_2>p_1$, $0\le p_1 < 1$ and $0\le p_2 < 1$,  then $K_2>K_1$, which indicates that the number of kernel machines in $ODC_2$ with higher overlap is bigger than the number of kernel machines in $ODC_2$.   Let's assume that there is an  test point $\textbf{x}^*$ and define that the probability that $\textbf{x}^*$ is captured by the ODC to be proportional to the maximum probability of $\textbf{x}^*$ among the domains. 

\begin{equation}
\begin{split}
p(\textbf{x}^*) &  = \sum_{i=1}^{K}   p(\textbf{x}^*,D_i) \\ & = \sum_{i=1}^{K}   p(\textbf{x}^*|D_i) \delta ( p(\textbf{x}^*|D_i) -  max_{j=1}^K (p(\textbf{x}^*|D_i))) \\
  & =   max_{i=1}^{K} p(\textbf{x}^*|D_i) 
\\ 
& = (2 \pi)^{-\frac{d_X}{2}} max_{i=1}^{K}  |\Sigma_i|^{-\frac{1}{2}} e^{-\frac{1}{2} (\textbf{x}^*-\mu_i)^\mathsf{T} \Sigma_i^{-1}
 (\textbf{x}^*-\mu_i)}
\end{split}
\end{equation}
where $\delta(0)=1$, $0$ otherwise\ignore{ is the Dirac delta function}.  The reason behind this definition of  $p(x^*)$ is that our method select the domain of preduction based on $arg max_{i=1}^{K} p(\textbf{x}^*|D_i)$.  
Hence $p_{ODC_1}(x^*) = max_{i=1}^{K_1} p_{ODC_1}(\textbf{x}^*|D_i)$ and $p_{ODC_2}(x^*) = max_{i=1}^{K_2} p_{ODC_2}(\textbf{x}^*|D_i)$. 

We start by the case where the points are uniformally distributed in the space. Under this condition  and assuming that spatially cohesive domain cover, this leads to that $p(\textbf{x}^*|D_i) \approx  \mathcal{N}(\mu_i, \Sigma) \forall i$, where $\Sigma_1 = \Sigma_2 \cdots =\Sigma_K = \Sigma$.  Hence 

\begin{equation}
\begin{split}
p(\textbf{x}^*|D_i) \propto  e^{-\frac{1}{2} (\textbf{x}^*-\mu_i)^\mathsf{T} \Sigma^{-1}
 (\textbf{x}^*-\mu_i)}  \\
ln(p(\textbf{x}^*|D_i)) \propto  - (\textbf{x}^*-\mu_i)^\mathsf{T} \Sigma^{-1}
 (\textbf{x}^*-\mu_i)
\end{split}
\end{equation}
Then 
\begin{equation}
\begin{split}
p(\textbf{x}^*) & =   max_{i=1}^{K} p(\textbf{x}^*|D_i) 
\\ 
& = (2 \pi)^{-\frac{d_X}{2}} \Sigma|^{-\frac{1}{2}} max_{i=1}^{K}  | e^{-\frac{1}{2} (\textbf{x}-\mu_i)^\mathsf{T} \Sigma^{-1}
 (\textbf{x}-\mu_i)} \\
 & \propto  max_{i=1}^{K}  e^{-\frac{1}{2} (\textbf{x}-\mu_i)^\mathsf{T} \Sigma^{-1} (\textbf{x}-\mu_i)} \\
ln( p(\textbf{x}^*)) & \propto max_{i=1}^{K}-  (\textbf{x}-\mu_i)^\mathsf{T} \Sigma^{-1}(\textbf{x}-\mu_i)
\end{split}
\end{equation}
Hence,  $p(\textbf{x}^*)$ gets maximized as it get closer to one of the centers of the domains $\mu_i$, defined by the ODC.  It is not hard to seen that  that chances of $\textbf{x}^*$ to be closer to one of the centers covered by $ODC_2$ is higher than  $ODC_2$, especially when $p_2 \gg p_1$.  This is since $K_1 = \frac{N}{(1-p_1)M}, K_2 = \frac{N}{(1-p_2)M}$. Hence $K_2 \gg K_1$ when $p_2 \gg p_1$. For instance, when $p_1=0$ and  $p2=0.9$, this leads to that $ODC_1$ will generate $K_1 = \frac{N}{M}$ domains, while $ODC_2$ will generate    $K_2= \frac{10 \cdot N}{M} = 10 K_1$, which is ten times more domains and centers. The fact that there are much more domains if $K_2 \gg K_1$ together with that there domains are spatially cohesive leads to $max_{i=1}^{K_1}-  (\textbf{x}^*-\mu^1_i)^\mathsf{T} \Sigma_1^{-1}(\textbf{x}^*-\mu^1_i) \gg max_{i=1}^{K_2}-  (\textbf{x}^*-\mu^2_i)^\mathsf{T} \Sigma_2^{-1}(\textbf{x}^*-\mu^2_i)$. The proof of this statement derives from the fact that $max_{i=1}^{K}   - (\textbf{x}^*-\mu_i)^\mathsf{T} \Sigma^{-1}(\textbf{x}^*-\mu_i)$ is could maximized by (1) if   $\textbf{x}^*$ gets very close to one of  $\mu_i, i=1:K$,and (2) smaller variance $|\Sigma|$, which is minimized by the nature by which ODC is created, since each domain $i$ is created by neighboring points to its center (\ie $|\Sigma_1| \gg |\Sigma_2|$). This directly leads to that if $K_2 \gg K_1$ then  
$max_{i=1}^{K_1}-  (\textbf{x}^*-\mu^1_i)^\mathsf{T} \Sigma_1^{-1}(\textbf{x}^*-\mu^1_i) \gg max_{i=1}^{K_2}-  (\textbf{x}^*-\mu^2_i)^\mathsf{T} \Sigma_2^{-1}(\textbf{x}^*-\mu^2_i)$. Hence,  $p_{ODC_2}(x^*) \gg p_{ODC_1}(x^*)$.

Even if the points are not uniformally distributed, it is still more likely that an ODC with higher overlap would have higher $p(x^*)$, since $x^*$ is close under expectation to one of the centers if more spatially cohesive domains are generated which increases with higher overlap. Our experiments also proves that the ODC concept generalizes on three real dataset where the training points are not distributed uniformally.  
\end{proof}

\vspace{-2mm}
\ignore{In order to better model the local prediction on each subdomain, we aim at creating a spatially cohesive ODC as possible, detailed in Sec~\ref{sec:4}. }

\ignore{\begin{figure}[h!]
\centering
\includegraphics[width=0.45\textwidth]{systemArch.png}
  \caption{[TO BE UPDATED OR REMOVED] Overlapping Domain Cover applied to 3D-pose estimation with pose instances taken from Human Eva dataset  \cite{SigalBB10} }
  \label{fig:DDTGP}
\end{figure}}

\subsection{Training}
\label{sec:4}
There are several overlapping clustering methods that include (e.g.~\cite{perez2013oclustr} and~\cite{bonchi2013overlapping}), which looks relevant for our framework. However these methods does not fit our purpose both equal-size  constraints for the local kernel machines. We also found them very slow in practice because their complexity varies from cubic to quadratic (with a big constant factor) on the training-set. These problems motivated us to propose a practical method that builds overlapping local kernel-machines with spatial and equal-size constraints. \ignore{Efficiently decomposing the domain with equal-size constraints is critical for our purpose. This is} These constraints are critical for our purpose since the number of points in each kernel-machine determine its local performance. Hence,  our training phase is two steps: (1) the training data is split into $K= N/ (1-p) M$ equal-sized clusters of $(1-p) M$ points. (2) an ODC with K overlapping subdomains is generated by augmenting each cluster with $p \cdot M$ points from the neighboring clusters.\ignore{ based on $M$ and $p$ and an ODC training model is learned by building a kernel machine on each subdomain.}

\vspace{-1.5mm}
\subsubsection{Equal-size Clustering}

\ignore{The performance of the local regression model depends on the number of points in each partition.
In order to bound the computation, we need the subdomains to be balanced in size. This is achieved by splitting the data into equal-sized clusters, and use these clusters to establish the subdomains. } There are recent algorithms that deal with size constraints in clustering. For example,\ignore{Zhu \etal}~\cite{Zhu2010} formulated the problem of clustering with size constraints as a linear programming problem. However such algorithms are not computationally efficient, especially for large scale datasets (e.g., Human3.6M). We study two efficient ways to generate equal size clusters; see Table~\ref{tab:thcomp} (last row) for their ODC-complexity.

\vspace{2mm}

\noindent \textbf{Recursive Projection Clustering (RPC) \cite{Chalupka:2013}.}
In this method,\ignore{\cite{Chalupka:2013} proposed to partition} the training data is partitioned to perform GPR prediction. \ignore{In summary, they begin with assigning } Initially all  data points are put in one cluster. Then, two points are chosen randomly\ignore{ to create a line through them.} and orthogonal projection of all the data onto the line connecting them is computed. Depending on the median value of the projections, The data is then split into two equal size subsets. The same process is then applied to each cluster to generate $2^l$ clusters after $l$ repetitions. The iterations stops once $2^l>K$. As indicated, the number of clusters in this method has to be a power of two and it might produce long thin clusters. 

\ignore{K-means clustering ~\cite{Hartigan1979} splits the domain such that the clusters have similar spatial extent, resulting in Voronoi partitioning.  However,  this is not what is needed for efficient computation of kernel machines.}

\ignore{
Therefore, we propose two greedy algorithms to obtained equal-size clustering. }

\vspace{2mm}

\noindent \textbf{Equal-Size K-means (EKmeans).} We propose a variant of k-means clustering~\cite{Hartigan1979} to generate equal-size clusters.\ignore{The algorithms mainly modify the assignment step of the k-means to bound the size of the resulting clusters.  }\ignore{Given a set of data points $X=\{\mathbf{x}_1, \cdots, \mathbf{x}_N  \}$, t} The goal is to obtain disjoint partitioning of $X$ into clusters $C=\{C_1, \cdots, C_K\}$,  similar to the k-means objective, minimizing the within-cluster sum of squared Euclidean distances, $ C = \arg_C J(C) = \min \sum_{j=1}^K \sum_{\mathbf{x_i}\in C_j} d(\mathbf{x_i}, \boldsymbol{\mu}_j)$,  
\ignore{\[ \small C = \arg_C \min \sum_{j=1}^K \sum_{\mathbf{x_i }\in C_j} d(\mathbf{x_i},\boldsymbol{\mu}_j) \]}
where $\boldsymbol{\mu}_i$ is the mean of cluster $C_i$, and $d(\cdot,\cdot)$ is the squared distance. Optimizing this objective is NP-hard and  k-means iterates between the assignment and update steps as a heuristic to achieve a solution; $l_1$ denotes number of iterations of kmeans. We add equal-size constraints  $\forall  (1 \leq i  \leq K),  |C_i| = N/K = (1-p) M$. \ignore{We add extra constraints to the problem to guarantee equal-sized partitioning $\forall  (1 \leq i  \leq K),  |C_i| = N/K$. }

\ignore{
In order to achieve this partitioning, we propose two heuristic algorithms, which mainly modify the assignment step of the k-means to bound the size of the resulting clusters. We introduce here one of them which achieves less cost $J(C)$; see SM for pseudo code of the two algorithms and our cost analysis of them. We call the better algorithm {\em Assign and Balance (AB) EKmeans}, in which }
\ignore{{\em (2) Iterative Minimum-Distance Assignments (IMDA) k-means:} we initialize a pool of unassigned points $\tilde{X}  =  X$ and initialize all clusters as empty.  Given the means computed from the previous update steps, we compute the distances $d(\mathbf{x}_i,\boldsymbol{\mu}_j)$ for all points/center pairs. We iteratively pick the minimum distance pair 
\[\small (\mathbf{x}_p,\mathbf{mu}_l)  : d(\mathbf{x}_p,\boldsymbol{\mu}_l) \le d(\mathbf{x}_i,\boldsymbol{\mu}_j)  \forall x_i \in \tilde{X} \text{and}  |C_l| < N/K \]
and assign point $x_p$ to cluster $l$. The point is then removed from the pool of unassigned points. if  $|C_l| = N/K$,  then it is marked as balanced and no longer considered. The process is repeated until the pool is empty.{\em (3) Assign and Balance (AB) k-means:}}

In order to achieve this partitioning, we propose an efficient heuristic algorithm, denoted by {\em Assign and Balance (AB) EKmeans}. It mainly modifies the assignment step of the k-means to bound the size of the resulting clusters. We first assign the points to their closest see center as typically done in the assignment step of k-means. We use $C(\mathbf{x}_p)$ to denote the cluster assignment of a given point $\mathbf{x}_p$. This results in three types of clusters: balanced, overfull, and underfull clusters. Then some of  the points in the overfull clusters are redistributed to the underfull clusters by assigning each of these points to the closest underfull cluster.  This is achieved by initializing a pool of overfull points defined as $ \tilde{X}  = \{\mathbf{x}_p : \mathbf{x}_p \in C_l , |C_l| > N/K \}$; see Figure~\ref{figABkmeans57}.\ignore{ \begin{wrapfigure}{r}{0.45\textwidth}
  \vspace{-12pt}
\begin{tabular}{ccc}
\bmvaHangBox{\fbox{\includegraphics[width=2.3cm]{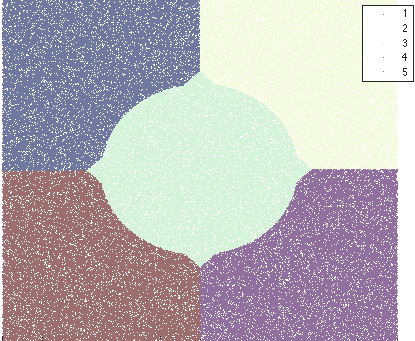}}}&
\bmvaHangBox{\fbox{\includegraphics[width=2.7cm]{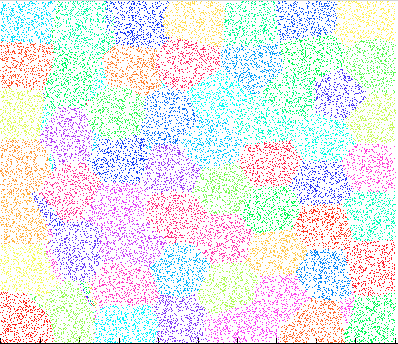}}}\\
(a)&(b)
\end{tabular}
\caption{Applying our Assign and Balance variant of Kmeans on 300,000 random 2D points: (a) 5 clusters; (b)  57 clusters (best seen in color).}
\label{fig:ekmeans}
\end{wrapfigure}}

\begin{figure}[t!]
\includegraphics[width=0.45\textwidth]{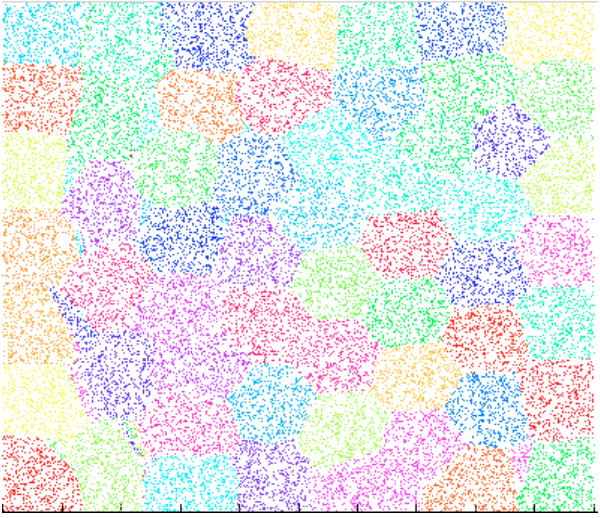}
\caption{AB-EKmeans on 300,000 2D points, K= 57}
\label{figABkmeans57}
\end{figure}

Let us denote the set of underfull clusters by $\tilde{C} = \{C_p: |C_p| < N/K \} $. We compute the distances $d(\mathbf{x}_i, \boldsymbol{\mu}_j), \forall \mathbf{x_i} \in \tilde{X} \, \text{and}\, C_i \in \tilde{C}$. Iteratively, we pick the minimum distance pair $(\mathbf{x}_p,\boldsymbol{\mu}_l) $ and assign $\mathbf{x}_p$ to cluster $C_l$ instead of cluster $C(\mathbf{x}_p)$. The point is then removed from the overfull pool. Once an underfull cluster becomes full it is removed from the underfull pool, once an overfull cluster is balanced, the remaining points of that cluster are removed from overfull pool. The intuition behind this algorithms is that,  the cost associated with the initial optimal assignment (given the computed means) is minimally increased by each swap since we pick the minimum distance pair in each iteration. Hence the cost is kept as low as possible while balancing the clusters. We denote the the name of this Algoirthm as Assign and Balance EKmeans. Algorithm~\ref{alg:ddclusterALg1} illustrates the overall assignment step and Fig.~\ref{fig_ABKmeans_balancing} visualizes the balancing step.

\begin{figure} [t!]
\centering
\includegraphics[width=0.40\textwidth]{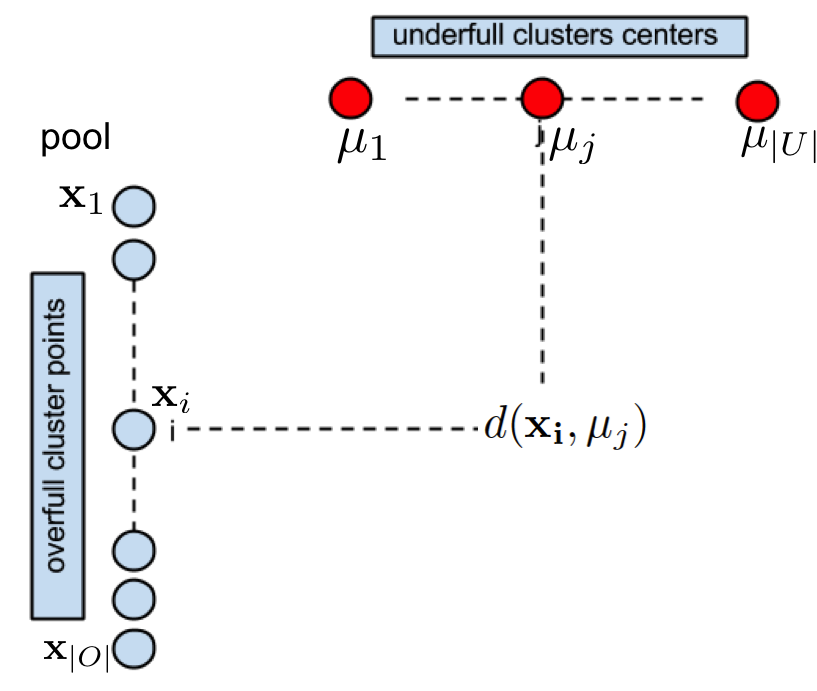}
\caption{AB Kmeans: Balancing Step}
\label{fig_ABKmeans_balancing}
\end{figure}

\begin{algorithm}[t!]
\KwIn{$\textbf{X} (N \times d_x ),{\{\boldsymbol{\mu}_i\}}_{i=1}^{K}$}
\KwOut{labels}
1- Assign the points initially to its closest center; this will put the clusters into 3 groups (1) balanced clusters   (2) overflowed clusters (3) under-flowed clusters.\\
2- Create a matrix $\textbf{D} \in R^{N \times K}$, where $\textbf{D}[i,j]$ is the distance between the $i^{th}$ point to the $j^{th}$ cluster center; rows are restricted points belongs only to the overflowed clusters; columns are restricted to underflowed cluster centers \\
3- Get the coordinate $(i_*,j_*)$ that maps the smallest distance in $\textbf{D}$.\\
4- Remove the $i_*^{th}$ row from matrix $\textbf{D}$ and mark it as assigned to the $j^{th}$ cluster\\
5- If the size of the cluster $j$ achieves the ideal size (i.e. ~ $n/K$), then remove the $j^{th}$ column from matrix $\textbf{D}$.\\
6- Go to step 3 if there is still unassigned points
\caption{Assign and Balance (AB) k-means: Assignment Step}
\label{alg:ddclusterALg1}
\end{algorithm}

\ignore{
We performed experiments on synthetic datasets (see section ~\ref{sec:6}) on the two proposed variants of k-means algorithms and found that the Assign and Balance algorithm outperform the Iterative Assignment algorithm. Therefore, we focused on RPC and the Assign and Balance algorithm as a pre-processing step for generating the overlapping subdomains. Figure ~\ref{fig:ekmeans} shows example output of the  Assign and Balance algorithm  for assignment on random 2D points. We attach the pseudo-code of the two algorithms in the supplementary materials.}

\vspace{-1.5mm}
\subsubsection{Overlapping Domain Cover(ODC) Model}
\label{sec:train1}

 Having generated the disjoint equal size clusters, we generate the ODC subdomains based on the overlapping ratio $p$, such that $p \cdot M$ points are selected from the neighboring clusters. Let's assume that we select only the closest $r$  clusters to each cluster,  $C_i$ is closer to $C_j$ than $C_k$ if $\| \boldsymbol{\mu}_i - \boldsymbol{\mu}_j\| < \| \boldsymbol{\mu}_i - \boldsymbol{\mu}_k\|$. It is important to note that $r$ must be greater than  $p/(1-p)$ in order to supply the required $p \cdot M$ points; this is since number of points in each cluster is $(1-p) M$. Hence, the minimum value for $r$ is $\lceil(p \cdot M)/ ( (1-p)  \cdot M)\rceil = \lceil p/(1-p) \rceil$ clusters. Hence, we parametrize $r$ as $r =  \lceil t \cdot p/(1-p) \rceil, t\ge 1$. We study the effect of $t$ in the experimental results section. Having computed $r$ from $p$ and $t$, each subdomain $D_i$ is then created by merging the points in the cluster $C_i$ with $p \cdot M$ points, retrieved from the $r$ neighboring clusters. Specifically, the points are selected by sorting the points in each of $r$ clusters by the distance to $\boldsymbol{\mu}_i$. The number of points retrieved for each of the $r$ neighboring clusters is inversely proportional to the distance of its center to $\boldsymbol{\mu}_i$. If a subset of the $r$ clusters are requested to retrieve more than its capacity (i.e., $(1-p) M$), the set of the extra points are requested from the remaining clusters giving priority to the closer clusters (i.e., starting from the nearest neighboring cluster to the cluster on which the subdomain is created). \ignore{Its is worth to mention that a}As $t=1$ and $p$ increases, all points that belong to the $r$ clusters tends to be merged with $C_i$. In our framework, we used FLANN ~\cite{flann09} for fast NN-retrieval; see pseudo-code of ODC generation in  Appendix C.
 \ignore{
 ????? Does it need a clarification or an example ????.  
 We constraint these points to belong to up to $r$ clusters. If they belong to more than $r$ clusters, The closest clusters are chosen based on the distance between the cluster centers (i.e.  $\|\ \boldsymbol{\mu}_i - \boldsymbol{\mu}_j\|\, j=1:K , j \neq i$ ). Since, $|C_i| \approx N / K$ for equal size clusters, we choose $r \ge b \cdot K / N$, to guarantee that that there is at least $b$ points in the neighboring $r$ clusters. Hence, we define number of points in each subdomain as $M \approx N / K + b$. A successful assignment from our experiments is to increase $b$ with respect to $N/K$. We attach a detailed prescription of the parameter selection and the pseudo-code of the our Overlapping Domain Cover algorithm  and implementation details in the supplementary materials.}

\ignore{Having generated the ODC,}After the ODC is generated, we compute the the sample normal distribution using the points that belong to each subdomain. Then, a local kernel machine is trained for each of the overlapping subdomains. We denote the point set normal distribution of the subdomains as $p(\mathbf{x}|D_i) = \mathcal{N}(\boldsymbol{\mu}'_i \in R^{d_X}, \Sigma'_i \in R^{d_X \times d_X})$;\ignore{
\begin{equation}
p(\mathbf{x}|D_i) = \mathcal{N}(\boldsymbol{\mu}'_i \in R^{d_x}, \Sigma'_i \in R^{d_x \times d_x})
\end{equation}}
 ${\Sigma'}_i^{-1}$ is precomputed during the training for later use during the prediction. \ignore{The main rationale behind using our overlapping subdomain notion, is} Finally, we factor out all the computations that does not depend on the test point (for GPR, TGP, IWTGP) and store them with each sub domain as its local kernel machine. We denote the training model for subdomain $i$ as $\mathcal{M}^i$, which is computed as follows for GPR and TGP respectively.
\ignore{
foot  IWTGP respectively.}

\textbf{ {GPR}.}\ignore{In order to compute $\mathcal{M}^i$ In the case of GPR, we first precompute $(\textbf{K}^i_j + \sigma^2_{n^i_j} \textbf{I})^{-1} $, where  $\textbf{K}^i_j$ is an $M \times M$  kernel matrix, defined on the input points that belong to domain $i$. Since, GPR does not capture dependency between output dimensions, each dimension $j$ in the output could have its own hyper-parameters, which results in a different kernel matrix for each dimension $\textbf{K}^i_j$. We also precompute $(\textbf{K}^i_j + \sigma^2_{n^i_j} \textbf{I})^{-1} \textbf{y}_j$ for each dimension. Hence  $\mathcal{M}^i = \{ ( \textbf{K}^i_j + \sigma^2_{n^i_j} \textbf{I})^{-1},  (\textbf{K}^i_j + \sigma^2_{n^i_j} \textbf{I})^{-1} \textbf{y}_j ), j=1 : d_Y \}$.} Firstly, we  precompute $(\textbf{K}^i_j + \sigma^2_{n^i_j} \textbf{I})^{-1} $, where  $\textbf{K}^i_j\,$ is an $M \times M\,$  kernel matrix, defined on the input points in  $D_i$. Each dimension $j$ in the output could have its own hyper-parameters, which results in a different kernel matrix for each dimension $\textbf{K}^i_j$. We also precompute $(\textbf{K}^i_j + \sigma^2_{n^i_j} \textbf{I})^{-1} \textbf{y}_j$ for each dimension. Hence  $\mathcal{M}^i_{GPR} = \small\{ ( \textbf{K}^i_j + \sigma^2_{n^i_j} \textbf{I})^{-1},  (\textbf{K}^i_j + \sigma^2_{n^i_j} \textbf{I})^{-1} \textbf{y}_j )$ $, j=1 : d_Y \}\normalsize$.

\textbf{{TGP.}} \ignore{Since, TGP captures the the dependency between the outputs, t } The local kernel machine for each subdomain in TGP case is defined as $\mathcal{M}^i_{TGP} = \small\{(\textbf{K}_X^i + \lambda_{X}^i  \textbf{I})^{-1}, (\textbf{K}_Y^i + \lambda_{Y}^i  \textbf{I})^{-1}\}\normalsize$,  where $\textbf{K}_X^i$ and $\textbf{K}_Y^i$ are $M \times M$ kernel matrices defined on the input points and the corresponding output points respectively, which belong to domain $i$.

\textbf{IWTGP. } It is not obvious how to factor out computations that does not depend on the test data in the case of IWTGP, since the computational extensive factor(i.e., $({\textbf{W}^i}^\frac{1}{2} $ $\textbf{K}_X^i {\textbf{W}^i}^\frac{1}{2} + \lambda_x^i \textbf{I})^{-1}$,  $({\textbf{W}^i}^\frac{1}{2} \textbf{K}_Y^i {\textbf{W}^i}^\frac{1}{2} + \lambda_y^i \textbf{I})^{-1}$ ) does depend on the test set since $\textbf{W}^i$ is computed on test time.  To help factor out the computation, we used linear algebra to show that
\begin{equation}
(\textbf{D A D} + \lambda \textbf{I})^{-1} =   \textbf{D}^{-1} \textbf{A}^{-1} \textbf{D}^{-1} - \frac{\lambda \textbf{D}^{-2} \textbf{A}^{-2} \textbf{D}^{-2}} {1 + \lambda \cdot  tr(\textbf{D}^{-1} \textbf{A}^{-1} \textbf{D}^{-1})}
\label{eq:lem}
\end{equation}
where $\textbf{D}$ is a diagonal matrix,  $I$ is the identity matrix, and $tr(\textbf{B})$ is the trace of matrix $\textbf{B}$.  \begin{proof}
Kenneth Miller~\cite{Miller1981} proposed the following Lemma on Matrix Inverse. 
 \begin{equation}
 (G+H)^{-1} = G^{-1} -\frac{1}{1+tr(G H^{-1})} G^{-1} H G^{-1} 
 \end{equation}
  Applying Miller's lemma, where  $G = D A D$ and $H = \lambda I$, leads directly to Eq.~\ref{eq:lem}. 
 \end{proof}

Mapping $\textbf{D}$ to ${\textbf{W}^i}^\frac{1}{2}$ \footnote{$\textbf{W}$ is a diagonal matrix}, $\textbf{A}$ to either of $\textbf{K}_X^i$ or $\textbf{K}_Y^i$, we can compute $\mathcal{M}^i = \{{\textbf{K}_X^i}^{-1}, {\textbf{K}_Y^i}^{-1}\}$. Having computed $\textbf{W}^i$ on test time,  $({\textbf{W}^i}^\frac{1}{2} \textbf{K}_X^i {\textbf{W}^i}^\frac{1}{2} + \lambda_x \textbf{I})^{-1}$,  $({{\textbf{W}^i}}^\frac{1}{2} \textbf{K}_X {\textbf{W}^i}^\frac{1}{2} + \lambda_x \textbf{I})^{-1}$ could be computed in quadratic time given  $\mathcal{M}^i$  following equation ~\ref{eq:lem}, since the inverse and the power of ${\textbf{W}^i}^\frac{1}{2}$ has linear computational complexity since it is diagonal.

\ignore{ information are created and stored in DDTGP model as illustrated in Algorithm ~\ref{alg:ddtgpm}.
\begin{algorithm}
{\textbf{Input:} Clusters ${\{C_k\}}_{k=1}^{NC} $}
\KwOut{Overlapping subdomains ${\{D_k\}}_{k=1}^{NC}$}
\ForEach{Cluster $C_k$}{
Compute the closest $OCC$ clusters ${\{{{OVC_K}_i}\}}_{i=1}^{OCC}$ based on $DK_i = \| \boldsymbol{\mu}_k- \boldsymbol{\mu}_i \|$ , $i\neq k$\\
Let $LK_i = 1/DK_i,  {WK_i} =  \frac{LK_i}{\sum_{l=1}^{OCC} LK_l}$  ${i=1 : OCC}$\\
Let ${NPK_i} =  floor(WK_i * OPC)$, ${i=1 : OCC}$ \\
Let $ExKPts = OPC - \sum_{l=1}^{OCC} NPK_l$ \\
Let ${NPK_i}$ =  $NPK_i +1$ , $ i=1 : ExKPts $\\
$D_k =  C_k$ \\
\For{i=1 : OCC} { $Ps_i = KNN({OVC_K}_j,NPK_i )$ \\ $D_k = D_k \cup Ps_i$ }
}
\Comment{where KNN is the K-nearest neighbors algorithms. For high performance calculation of $KNN$, we use FLANN \cite{flann09} to calculate $KNN$.}
\caption{Subdomains Generation (Note: All ${\{D_k\}}_{k=1}^{NC}$ are stored as indices to $X$).  }
\label{alg:sd}
\end{algorithm}}

\ignore{
\begin{algorithm}
\KwIn{Overlapping subdomains ${\{D_k\}}_{k=1}^{NC}$}
\KwOut{DDModel (Domain Decomposition TGP Model) }
\For{i=1: NC}{ $M_i.SubDomain = D_k $  \\
$M_i.K_X =  (K_X)_k $ \\
\% where $(K_X)_k$ is the input kenel matrix evaluated on the points of  $D_k$ only. \\
$M_j.K_Y =  (K_Y)_k $\\
\% where $(K_Y)_k$ is the output kenel matrix evaluated on the points of  $D_k$ only. \\
$M_i.{{K_X}_{inv}} =( M_i.K_X + \lambda_x I)^{-1}$ \\
$M_i.{{K_Y}_{inv}} =( M_i.K_Y + \lambda_y I)^{-1}$ \\
$M_i.{{\boldsymbol{\mu}_D}_X} =\boldsymbol{\mu}_{D_i}$ \Comment{$d_x \times 1 $ vector}\\ 
$M_i.{{\Sigma_D}_X}^{-1} = {{{\Sigma}_D}_i}^{-1} $  \Comment{$d_x \times d_x $ matrix} \\
\Comment{where $\boldsymbol{\mu}_{D_i}, {{{\Sigma}_D}_i}$ are the MLE estimate mean and covariance estimate for input points in $D_i$ (used in Mode 3) }\\
$DDModel.Models[i] = M_i$;
}
\caption{DDTGP Model Generation}
\label{alg:ddtgpm}
\end{algorithm}
}
\ignore{
\begin{algorithm}
\KwIn{query point X, DDTGPModel}
\KwOut{Closest subdomains }
Let $KNN_x =  KNN(x, X)$  \Comment{X is all input training points.} \\
Let ${L_{KNN_x}} =  Clusters(KNN_x)$ \Comment{where Clusters($KNN_x$) returns the corresponding clusters for the returned points.} \\ 
$FT = frequencyTable(L_{KNN_x})$ \Comment{where $frequencyTable(L_{KNN_x})$ returns an frequency of each clusters in $L_{KNN_x}$. }\\
$[sortedFT, sortedInd]  = sort(FT$) \\
$closestDomains = sortedInd[1:CDC] $\\ 
$closestFTS = sortedFT[1:CDC]$ \\
\caption{Mode 2 closest clusters algorithm}
\label{alg:mode2FC}
\end{algorithm}
}

\subsection{Prediction}
\label{sec:pred}

ODC-Prediction \ignore{under our  framework }is performed in  three steps. 

\noindent\textbf{{ (1) Finding the closest subdomains.}}
The closest  $K' \ll K$  subdomains  are determined based on the covariance norm of the displacement of  the test input from the means of the subdomain distribution (i.e. $\|\mathbf{x}-\boldsymbol{\mu
}'_{i} \|_{{\Sigma'_{i}}^{-1}}, i = 1: K$, where \ignore{$\boldsymbol{\mu
}'_{i}, {\Sigma'_{i}}^{-1}$ are as described in section ~\ref{sec:train2}),} $\|\mathbf{x}-\boldsymbol{\mu
}'_{i} \|_{{\Sigma'_{i}}^{-1}}$  = $ (\mathbf{x}-\boldsymbol{\mu
}'_{i})^T {\Sigma'_{i}}^{-1}  (\mathbf{x}-\boldsymbol{\mu
}'_{i})$. The reason behind using the covariance norm is that it captures details of the density of the distribution in all dimensions. Hence, it better models $p(\mathbf{x}|D_i)$, indicating better prediction of $\mathbf{x}$ on $D_i$.\ignore{, which indicates better prediction of corresponding $\mathbf{x}$ on $D_i$}
\ignore{
\begin{itemize}
\itemsep0em 
\item \textbf{Mode 1: } Closest clusters are determined based on the distance to the means of the clusters (i.e. $\|x-\mu_k \|, k = 1: NC$ ).
\item \textbf{Mode 2: } This mode has an extra prediction parameter $M2K (Mode2 K)$. Algorithm ~\ref{alg:mode2FC}  details how closest clusters are selected in Mode 2.
\item \textbf{Mode 3: } Closest clusters are determined based on the covariance norm of the displacement of  the test input from the means of the clusters (i.e. $\|x-\mu_{D_k} \|_{{\Sigma_{D_k}}^{-1}}, k = 1: NC$ . where $\mu_{D_k}, {\Sigma_{D_k}}^{-1}$ are as described in Algorithm ~\ref{alg:ddtgpm}), $\|x-\mu_{D_k} \|_{{\Sigma_{D_k}}^{-1}}$  = $ (x-\mu_{D_k})^T {\Sigma_{D_k}}^{-1}  (x-\mu_{D_k})$ 
\end{itemize}
}
\ignore{
\begin{algorithm}
\KwIn{query point X, DDTGPModel}
\KwOut{Closest subdomains }
Let $KNN_x =  KNN(x, X)$  \Comment{X is all input training points} \\
Let ${L_{KNN_x}} =  Clusters(KNN_x)$ \Comment{where Clusters($KNN_x$) returns the corresponding clusters for the returned points} \\ 
$FT = frequencyTable(L_{KNN_x})$ \Comment{where $frequencyTable(L_{KNN_x})$ returns an frequency of each clusters in $L_{KNN_x}$ }\\
$[sortedFT, sortedInd]  = sort(FT$) \\
$closestDomains = sortedInd[1:CDC] $\\ 
$closestFTS = sortedFT[1:CDC]$ \\
\caption{Mode 2 closest clusters algorithm}
\label{alg:mode2FC}
\end{algorithm}
}

\noindent\textbf{{(2) Closest subdomains Prediction.}}
Having determined the closest subdomains, predictions are made for each of the closest clusters. We denote these predictions as ${\{Y^i_{x_*}\}}_{i=1}^{K'}$.  Each of these prediction are computed \ignore{by the following equation,} according to the selected kernel machine. For GPR, predictive mean and  variance are  $O(M\cdot d_X)\,$  and  $O(M^2 \cdot d_Y)\,$ respectively, for each output  dimension.  For TGP,   the prediction is  $O(l_2 \cdot M^2 \cdot d_Y)$; see Eq ~\ref{eq:tgp}. \ignore{, where $l_2$ is the number of iterations for  convergence, since TGP prediction has no closed form expression Similarly, any arbitrary kernel machine saves all factor of the prediction function that only depends on the training data.} \ignore{????? Add Example Kernel machines here????}

\noindent \textbf{{(3) Subdomains weighting and Final prediction.}}
The final predictions are formulated as $\textbf{Y}(x_*) = \sum_{i=1}^{K'} a_i \textbf{Y}^i_{x_*}, a_i>0, \sum_{i=1}^{K'} {a_i} =1 $. ${\{a_i\}}_{i=1}^{K'}$ are computed as follows.  Let the distribution of domain ${\{D^i_{x_*} = \|x-\mu'_{i} \|_{{\Sigma'_{k}}^{-1}}\}}_{i=1}^{K'}$ denotes to the distances to the closest subdomains, ${\{L^i_{x_*} = {1}/{D^i_{x_*}}\}}_{i=1}^{K'}$, $a_i ={L^i_{x_*}}/{\sum_{i=1}^{K'} {L^i_{x_*}}} $. \ignore{; see table ~\ref{tab:thcomp} for detailed complexity of the ODC framework in GPR and TGP. }

It is not hard to see that when $K' = 1$, the prediction step reduces to regression using the closest subdomain to the test point. However it is  reasonable in most of the prior work to make prediction using the closest model, we generalized it to $K'$ closest kernel machines and combining their predictions,  so as to study how consistency of the combined prediction behaves as the overlap increases (i.e., $p$); see the experiments.\ignore{ Table ~\ref{tab:thcomp} illustrates the computational complexity of the framework. }


\ignore{\begin{itemize}
\itemsep0em 
\item \textbf{Mode 1: } Let ${\{D_{x_i}\}}_{i=1}^{CDC}$ denotes to the distances to the closest clusters, ${\{L_{x_i} = \frac{1}{D_{x_i}}\}}_{i=1}^{CDC}$. Then $W_i =\frac{L_x(i)}{\sum_{i=1}^{CDC} {L_x(i)}} $
\item \textbf{Mode 2: }$W_i = \frac{closestFTS(i)}{\sum_{i=1}^{CDC} closestFTS(i)}$
 where closestFTS is as computed in subsection ~\ref{ss:fcc}.
\item \textbf{Mode 3: } Let ${\{D_{x_i} = \|x-\mu_{D_k} \|_{{\Sigma_{D_k}}^{-1}}\}}_{i=1}^{CDC}$ denotes to the distances to the closest clusters, ${\{L_{x_i} = \frac{1}{D_{x_i}}\}}_{i=1}^{CDC}$. Then  $W_i =\frac{L_x(i)}{\sum_{i=1}^{CDC} {L_x(i)}} $.
\end{itemize}}
\ignore{
noindent \textbf{Computational Complexity:} Overall computational complexity of the prediction = $O( n_{Tst}* CDC * (\frac{N_{Tr}}{NC}) ^2)$ (i.e. quadratic complexity) which is significantly better than cubic complexity of the original TGP model. where $n_{Tst}$ is the number of the test points, $N_{Tr}$ is the number of points in the training data.
\subsection{Data Bias handling ($CovShift$ Flag = $True$)}
\input{databias}}

\section{Experimental Results}
\label{sec:6}


\ignore{
In this section, we present our experimental evaluation. We start by evaluating the performance of the clustering algorithms, we proposed for the purpose of domain decomposition. Then, we discuss the performance of our ODC method on TGP and IWTGP only for space limit purpose. However, it is also applicable to GPR as presented. More results are attached in the supplementary materials}
\ignore{In this section, we present the experimental validation of our ODC-framework. We firstly present the datasets then the details of our experiments. }

\textbf{Equal-Size Kmeans Step Experiment:}  We also tried another variant for Ekmeans that we call Iterative Minimum-Distance Assignments EKmeans (IMDA- Ekmeans).  Note that the algorithm presented earlier in the paper is denoted as Assign and Balance Kmeans (AB-Kmeans).  The IMDA- Ekmeans  algorithm works as follows. We initialize a pool of unassigned points $\tilde{X}  =  X$ and initialize all clusters as empty.  Given the means computed from the previous update steps, we compute the distances $d(\mathbf{x}_i,\mu_j)$ for all points/center pairs. We iteratively pick the minimum distance pair 
\[\small (\mathbf{x}_p,\mu_l)  : d(\mathbf{x}_p,\mu_l) \le d(\mathbf{x}_i,\mu_j)  \forall x_i \in \tilde{X} \text{and}  |C_l| < N/K \]
and assign point $x_p$ to cluster $l$. The point is then removed from the pool of unassigned points. if  $|C_l| = N/K$,  then it is marked as balanced and no longer considered. The process is repeated until the pool is empty; see Algorithm~\ref{alg:ddclusterALg2}. 

\begin{algorithm}[b!]
\KwIn{$\textbf{X} (N \times d_x ),{\{\boldsymbol{\mu}_i\}}_{i=1}^{K}$}
\KwOut{labels}
1- Create a matrix $\textbf{D} \in R^{N \times K}$, where $\textbf{D}[i,j]$ is the distance between the $i^{th}$ point to the $j^{th}$ cluster center.\\
2- Get the coordinate $(i_*,j_*)$ that maps the smallest distance in $\textbf{D}$.\\
3- Remove the $i_*^{th}$ row from matrix $\textbf{D}$ and mark it as assigned to the $j^{th}$ cluster\\
4- If the size of the cluster $j$ achieves the ideal size (i.e. ~ $n/K$), then remove the $j^{th}$ column from matrix $\textbf{D}$.\\
5- Go to step 2 if there is still unassigned points
\caption{Iterative Minimum-Distance Assignments (IMDA) k-means: Assignment Step}
\label{alg:ddclusterALg2}
\end{algorithm}


Table ~\ref{tab:clustering} presents the average cost over 10 runs of IMDA-Ekmeans and AB-Ekmeans algorithms. We initialize both the AB-Ekmeans and  IMDA-EKmeans algorithms by the cluster centers computed by running the standard k-means. As illustrated in table~\ref{tab:clustering}, the AB-Ekmeans  outperforms IMDA-Ekmeans in these experiments, which motivated us to utilize AB Ekmeans, which is presented in the paper, against  IMDA-Ekmeans under our ODC prediction framework.  Our interpretation for these results is because AB-Ekmeans initializes the assignment with an assignment that minimizes  the cost $J(C) = \min \sum_{j=1}^K \sum_{\mathbf{x_i}\in C_j} d(\mathbf{x_i}, \mu_j)$ given the cluster centers and then balance the clusters.  In all the following experiments, we uses AB-EKmeans due to its clear superior performance to IMDA-EKmeans.

\begin{table}[htbp!]
  \centering
  \caption{$J(C)$ of AB-kmeans and  IMDA-kmeans on a dataset of 10,000 random 2D points, averaged over 10 runs}
    \label{tab:clustering}%
    \begin{tabular}{|c|c|c|c|}
    \hline
          & \textbf{K=5} & \textbf{K = 10} & \textbf{K=50} \\
              \hline
    \textbf{AB-kmeans} & 1077.3 & 540.241 & 105.505 \\
    \textbf{IMDA-kmeans} & 1290.6 & 657.446 & 122.006 \\
    \textbf{Error Reduction} & 16.53\% & 17.83\% & 13.52\% \\
    \bottomrule
    \end{tabular}%
\end{table}

\begin{figure}[th!]
\includegraphics[width=0.5\textwidth]{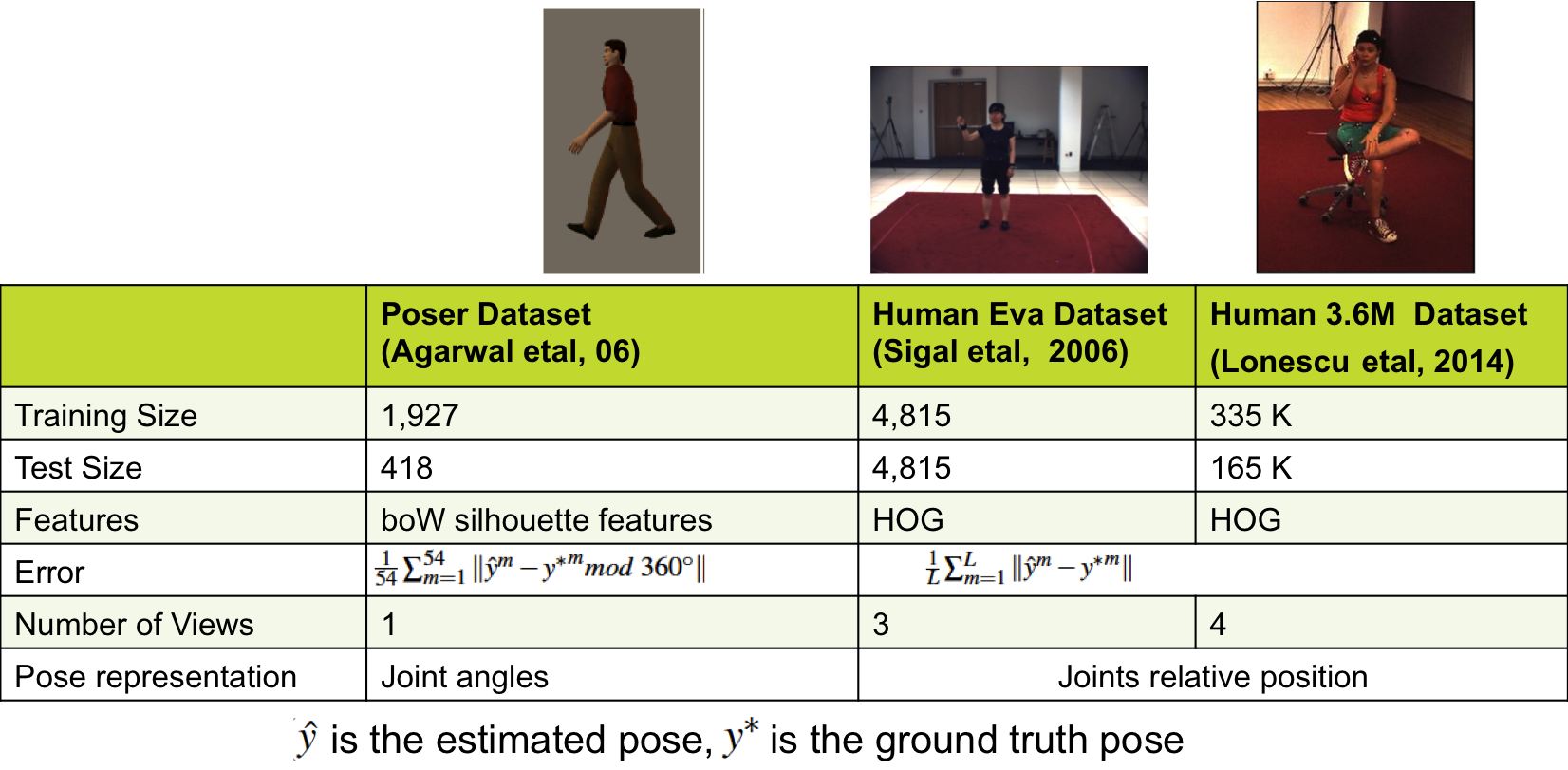}
\caption{Datasets, Representations, and  Features}
\label{fig_datasets}
\end{figure}

\noindent \textbf{Datasets and Setup. } We evaluated our framework on three human pose estimation datasets, Poser, HumanEva, and Human3.6M; see Fig.~\ref{fig_datasets} for summary of setup and representation for each. \ignore{We here briefly describe the configuration, features, pose representation, and error measure for each dataset.} Poser dataset~\cite{Trig06} consists of 1927 training and 418 test images\ignore{, which are synthetically generated and tuned to unimodal predictions}. The image features, corresponding to bag-of-words representation with silhouette-based shape-context features. The error is measured by the root mean-square error (in degrees), averaged over all joints angles, and is given by: \small$Error(\hat{y}, y^*) = \frac{1}{54} \sum_{m=1}^{54} \| {\hat{y}}^m  - {y^*}^m mod$  $360^\circ \|$\normalsize , where $\hat{y} \in R^{54}$ is an estimated pose vector, and $y^* \in R^{54}$ is a true pose vector. HumanEva datset \cite{SigalBB10} contains synchronized multi-view video and Mocap data of 3 subjects performing multiple activities. We use  \ignore{histogram of oriented gradient (HOG)} HOG features~\cite{HOG05} (\small$\in R^{270}$\normalsize) proposed in \cite{Bo:2010}. We use training and validations subsets of HumanEva-I and only utilize data from 3 color cameras with a total of 9630 image-pose frames for each camera. This is consistent with experiments in \cite{Bo:2010} and \cite{Yamada:2012}. We use half of the data for training and half for testing. Human3.6M~\cite{human3m} is a dataset of millions of Human poses. We managed to evaluate our proposed ODC-framework on six Subjects (S1, S2, S6, S7, S8, S9) from it,  which is $\approx$ 0.5 million poses. We split them into  67\% training 33\% is testing. HOG features are extracted for 4 image-views for each  pose and concatenated into 3060-dim vector. Error for each pose, in both HEva (in $mm$) and Human 3.6 (in $cm$),  is measured as \ignore{average Euclidean distance:}\small$Error(\hat{y},y^*) = \frac{1}{L} \sum_{m=1}^{L} \|\hat{y}^m - {y^*}^m\|$\normalsize\ignore{, where $\hat{y}$ is an estimated pose vector, and $y^*$ is a true pose vector}.

\begin{figure*}[t!]
\centering
\begin{tabular}{c}
{{{ \includegraphics[width=1.0\textwidth,height=0.34\textwidth]{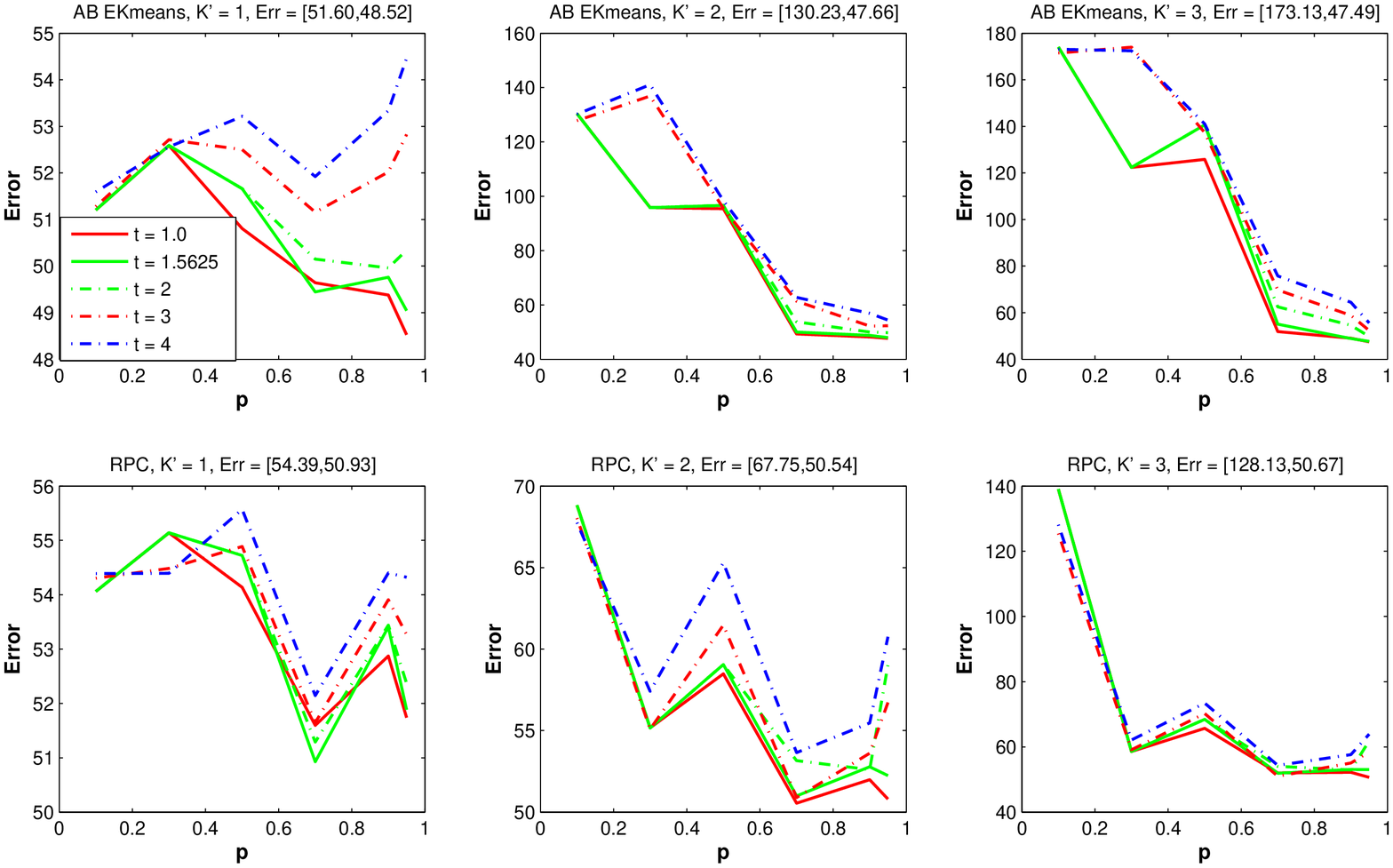}}}}\\
 {(a) GPR-ODC (M=800) } \\
{{  \includegraphics[width=1.0\textwidth,height=0.34\textwidth]{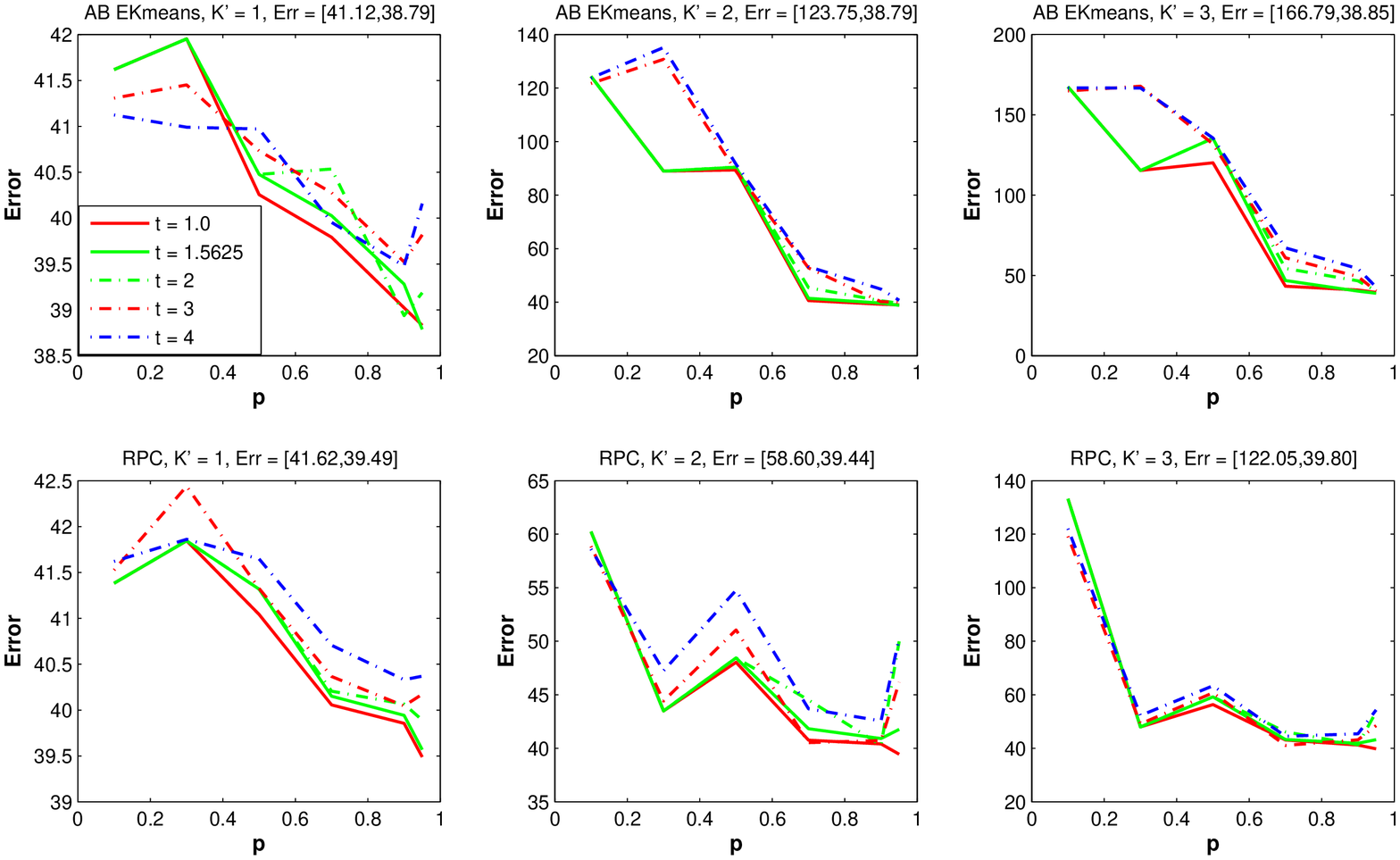}}} \\
{ (b) TGP-ODC (M=800)} \\
\end{tabular}
\vspace{2mm}
\caption{ODC framework Parameter Analysis of GPR and TGP  on Human Eva Dataset}
\label{fig:ODCAnalysis}
\end{figure*}

\begin{table*}[t!]
  \centering
  \caption{Error \& Time on Poser and Human Eva datasets (Intel core-i7 2.6GHZ), M = 800}
    \scalebox{0.8}{
    \begin{tabular}{|l|l|lll|lll|}
    \hline
          &   & \textbf{Poser} & \textbf{} &       & \textbf{HumanEva} &       &      \\
    \hline
         & & \textbf{Error (deg)} & \textbf{Training Time} & \textbf{Prediction Time} & \textbf{Error (mm) } & \textbf{Training Time} & \textbf{Prediction Time} \\
 \textbf{TGP}   & \textbf{NN ~\cite{Bo:2010}} & 5.43       &  -     &    188.99 sec   & \textbf{38.1}      &  -     & 6364 sec \\
  & \textbf{ODC ($p= 0.9, t=1, K'=1$)-Ekmeans} & \textbf{5.40 }     &      (3.7 +25.1 ) sec  &   \textbf{16.5}  sec  &    \textbf{38.9 }  &  (2001 + 45.4) sec    &  \textbf{298} sec\\
    & \textbf{ODC ($p= 0, t=1, K'=1$)-Ekmeans} &   7.60    &    (3.9 + 1.33) sec   &   14.8 sec  &    41.87   & (240 + 4.9 ) sec       &  257 sec \\
      &  \textbf{ODC ($p= 0.9, t=1, K'=1$)-RPC} & 5.60      &      (0.23 +41.6 ) sec  &   15.8 sec  &   39.9  &    ( 0.45 + 49.1) sec     & 277 sec\\
  &  \textbf{ODC ($p= 0, t=1, K'=1$)-RPC} &   7.70    &   (0.15 + 1.7) sec   &   13.89 sec  &  42.32    &  (0.19 + 5.2)      sec &  242 sec\\
      \hline
  \textbf{GPR}  & \textbf{NN} & 6.77      &   -    &  24 sec     &   54.8    &      - &    618  sec \\
   & \textbf{ODC ($p= 0.9, t=1 , K'=1$)-Ekmeans} &  \textbf{6.27}      &  (3.7 +11.1 ) sec  &       \textbf{0.56}  sec & \textbf{49.3}  &   (2001 + 42.85)sec & \textbf{79} sec \\
   & \textbf{ODC($p= 0.0, t=1 , K'=1$)-Ekmeans} & 7.54      &   ( 3.9 + 1.38 sec) &    0.35 sec   & 49.6  &  (240 + 6.4) sec  &  48 sec\\
  & \textbf{ODC ($p= 0.9, t=1 , K'=1$)-RPC} &  6.45      &  (0.23 +17.3 ) sec  &       0.52  sec & 52.8  & (0.49 + 46.06) sec     &  64 sec\\
    & \textbf{ODC ($p= 0.0, t=1 , K'=1$)-RPC  = ~\cite{Chalupka:2013}} &   7.46    &   (0.15 + 1.5) sec &    0.27 sec   & 54.6  &  (0.26 + 4.6 ) sec & 44 sec\\
    & \textbf{FIC ~\cite{fic06}} &   7.63   &   (- + 20.63)   &    0.3106     &   68.36   &  -     & 102  sec\\
    \hline
    \end{tabular}}%
  \label{tab:tblRes}%
  \vspace{-5mm}
\end{table*}%

There are four control parameters in our ODC framework: $M$, $p$, $t$, and $K'$.\ignore{We started by performing inherent analysis of these parameters.} Figure~\ref{fig:ODCAnalysis}  shows our parameter analysis with different values of $p$, $t$ and $K'$ on HumanEva dataset  for GPR and TGP as local regression machines, where $M=800$.  Each sub-figure consists of six plots in two rows. The first row indicates the results using AB-Ekmeans clustering scheme, while the second row shows the results for RPC clustering scheme. Each row has three plots, one for $K'=1$, $2$, and $3$ respectively. 
\ignore{\begin{figure}[t!]
\centering
\begin{tabular}{c}
\bmvaHangBox{{{ \includegraphics[width=1.0\textwidth,height=0.34\textwidth]{figGPR800HEva2.eps}}}} \\
\vspace{-7mm}
 {\tiny(a) GPR-ODC (M=800) } \\
\bmvaHangBox{{  \includegraphics[width=0.\textwidth,height=0.34\textwidth]{figTGP800HEva2.eps}}}\\
 \vspace{-7mm}
{ \tiny (b) TGP-ODC (M=800)} \\
\ignore{\bmvaHangBox{{  \includegraphics[width=1.0\textwidth,height=0.34\textwidth]{figTGP400HEva.eps}}}\\
(c) TGP-ODC (M=400) \\}
\end{tabular}
\caption{ODC framework Parameter Analysis of GPR and TGP  on Human Eva Dataset (seen in color)}
\label{fig:ODCAnalysis}
\vspace{-15mm}
\end{figure}}
Each plot shows the error of different $t$ against $p$ from 0 to 0.95; i.e., it shows how the overlap affects the performance for different values of $t$. Each plot shows, on its top caption, the minimum and the maximum overlap regression errors where $t \to 1$. Looking at these plots, there are a number of observations:

\noindent\textbf{(1)} As $t \to 1$ (the solid red line), the behavior of the error tends to reduce as $p$ increases, i.e., the overlap.

\noindent\textbf{(2)} Comparing different $K'$, the behavior of the error indicates that combining multiple predictions (i.e., $K'=2$ and $K'=3$), gives poor performance, compared with  $K'=1$,  when the overlap is small. However, all of them, $K'=1$, $2$, and  $3$, performs well as  $p \to 1$; see column 2 and 3 in Fig.~\ref{fig:ODCAnalysis} and Fig.~\ref{fig_K_all}.  This indicates consistent prediction of neighboring subdomains as $p$ increases; see also Fig.~\ref{fig_K_each} for side by side comparison of different $K'$. The main reason behind this behavior is  that as $p$ increases, the local models of the  neighboring subdomains normally share more training points on their boundaries, which is reflected as shared constraints during the training of these models making them more consistent on prediction. 

\noindent\textbf{(3)} Comparing the first row to the second row in each subfigure, it is not hard to see that our AB-Ekmeans partitioning scheme consistently \ignore{ends up with better error rates compared with} outperforms RPC \cite{Chalupka:2013}, \eg the error in cases of  GPR (M=800) is  47.48mm for AB-EKmeans and 50.66mm for RPC, TGP (M=800) is  38.8mm for AB-EKmeans and 39.8mm for RPC. This problem is even more severe when using smaller $M$, \eg the error in case of \ignore{ GPR (M=400) is 45.20mm for EKmeans and 50.56mm for RPC, } TGP (M=400) is 39.5mm for EKmeans and 47.5mm for RPC; see a detailed plot for M=400 in Fig.~\ref{fig:ODCAnalysis400}.  We noticed sigficant drop in the performance as M decreases. For instance when $M=200$, The error for TGP best performance increased to 43.88mm instead of 38mm for $M=800$.  

\noindent\textbf{(4)} TGP gives better prediction than GPR (i.e., 38mm using TGP compared with 47mm using GPR). 

\noindent\textbf{(5)} As $M$ increases, the prediction error decreases. For instance, when $M=200$, The error for TGP best performance increased to 43.88mm instead of 38.9mm for $M=800$. We found these observation to be also consistent on Poser dataset. 
\begin{figure*}[t!]
\includegraphics[width=1.0\textwidth]{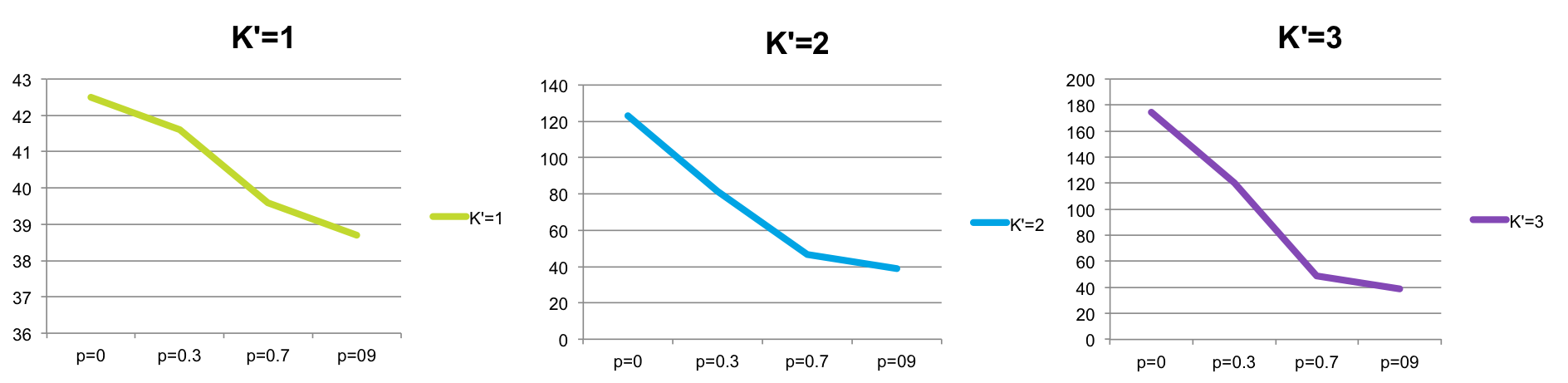}
\caption{HumanEva TGP different K' as overlap increase, ,  M=800}
\label{fig_K_each}
\end{figure*}

\begin{figure}[t!]
\includegraphics[width=0.5\textwidth]{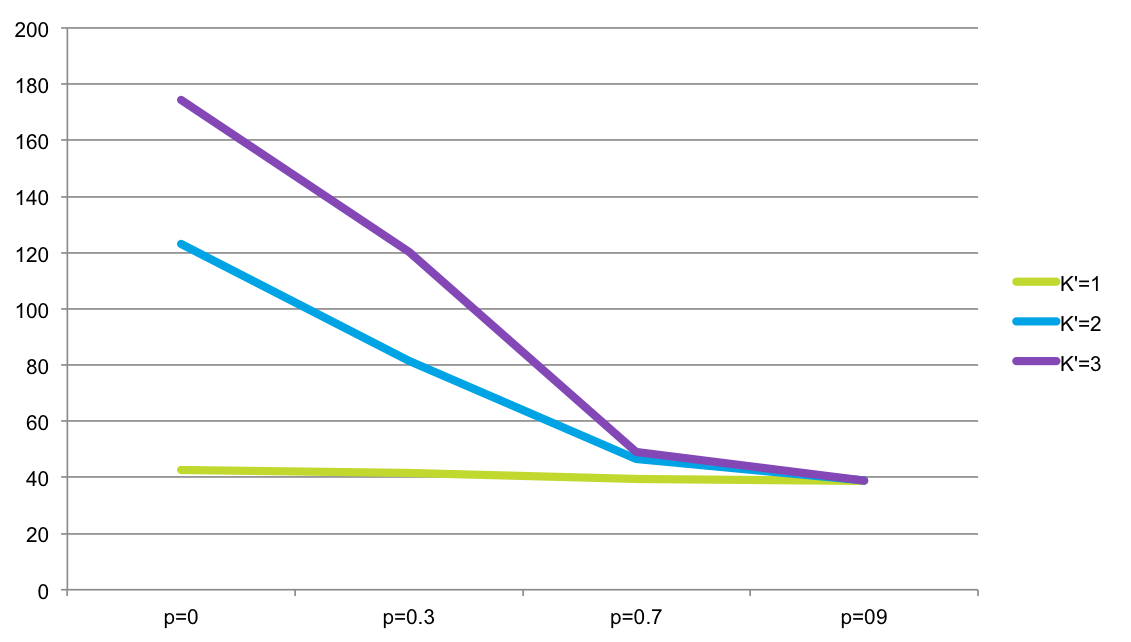}
\caption{Increasing K' significantly heart the performance for small overlap (Human Eva TGP,  M=800)}
\label{fig_K_all}
\end{figure}

\begin{figure*}[h!]
\centering
\begin{subfigure}[b]{1.0\textwidth}
 \includegraphics[width=1.0\textwidth,height=0.34\textwidth]{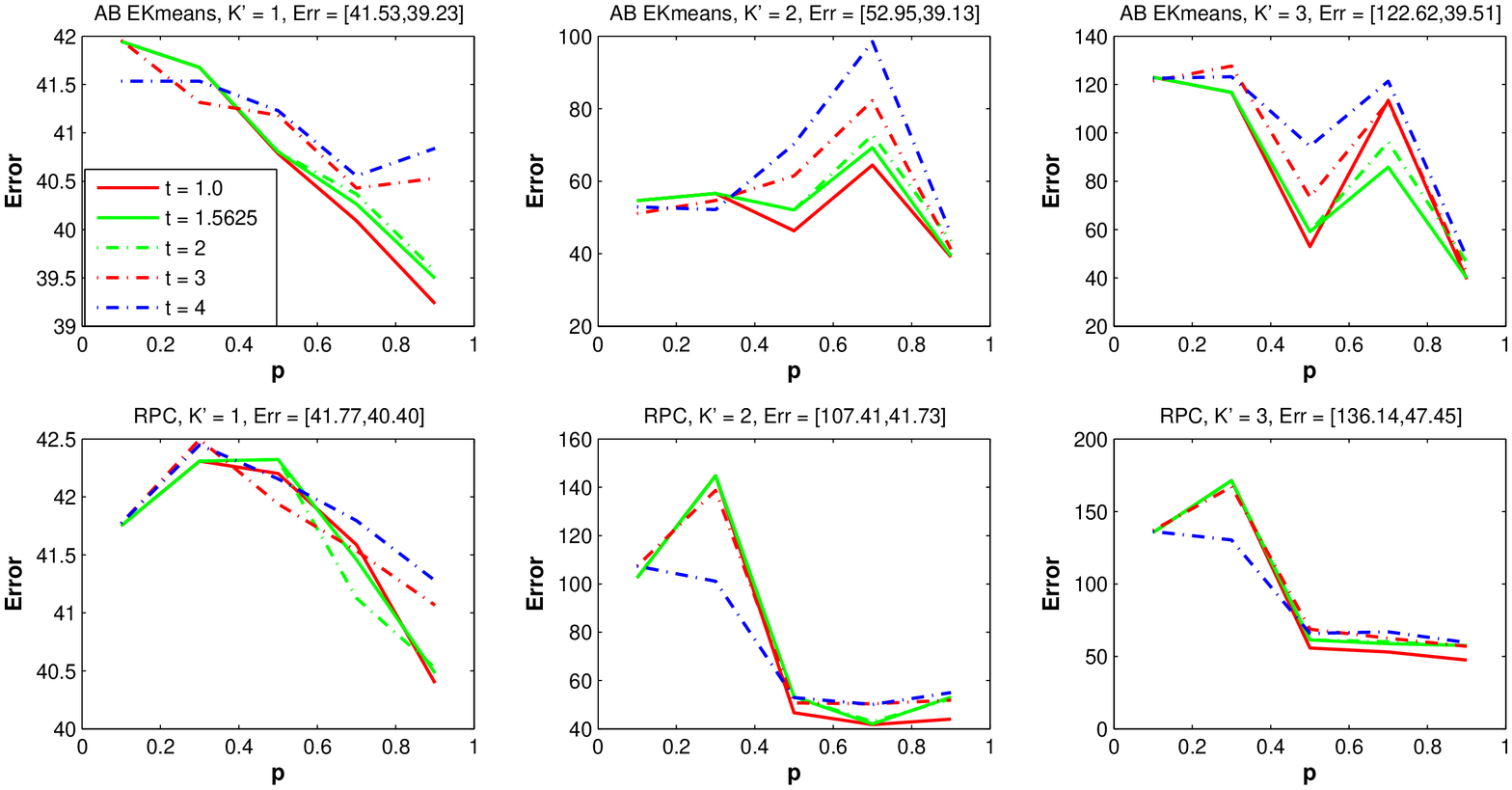}
  \caption{TGP-ODC (M=400)}
\end{subfigure}
\begin{subfigure}[b]{1.0\textwidth} \includegraphics[width=1.0\textwidth,height=0.34\textwidth]{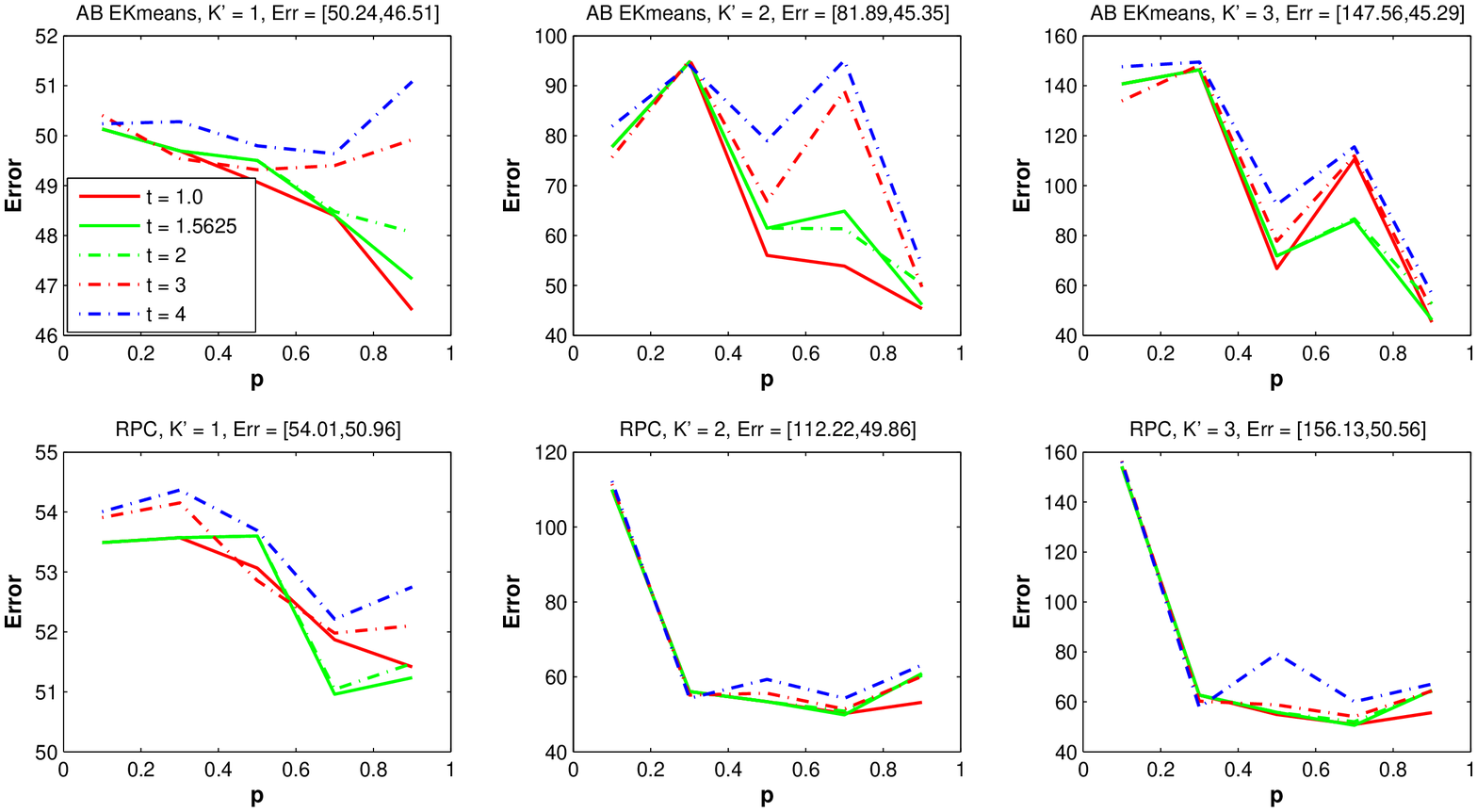}
\caption{GPR-ODC (M=400)}
\end{subfigure}
\caption{Overlapping Domain Cover Parameter Analysis of GPR and TGP  on Human Eva Dataset (best seen in color) (M=400)}
\label{fig:ODCAnalysis400}
\end{figure*}

This analysis helped us conclude recommending choosing  $t$ close to $1$, big overlap ($p$ closer to 1), and $K'=1$ is sufficient for accurate prediction. \ignore{
\begin{wrapfigure}{r}{0.3\textwidth}
\centering
  \vspace{-10pt}
  \includegraphics[width=0.35\textwidth]{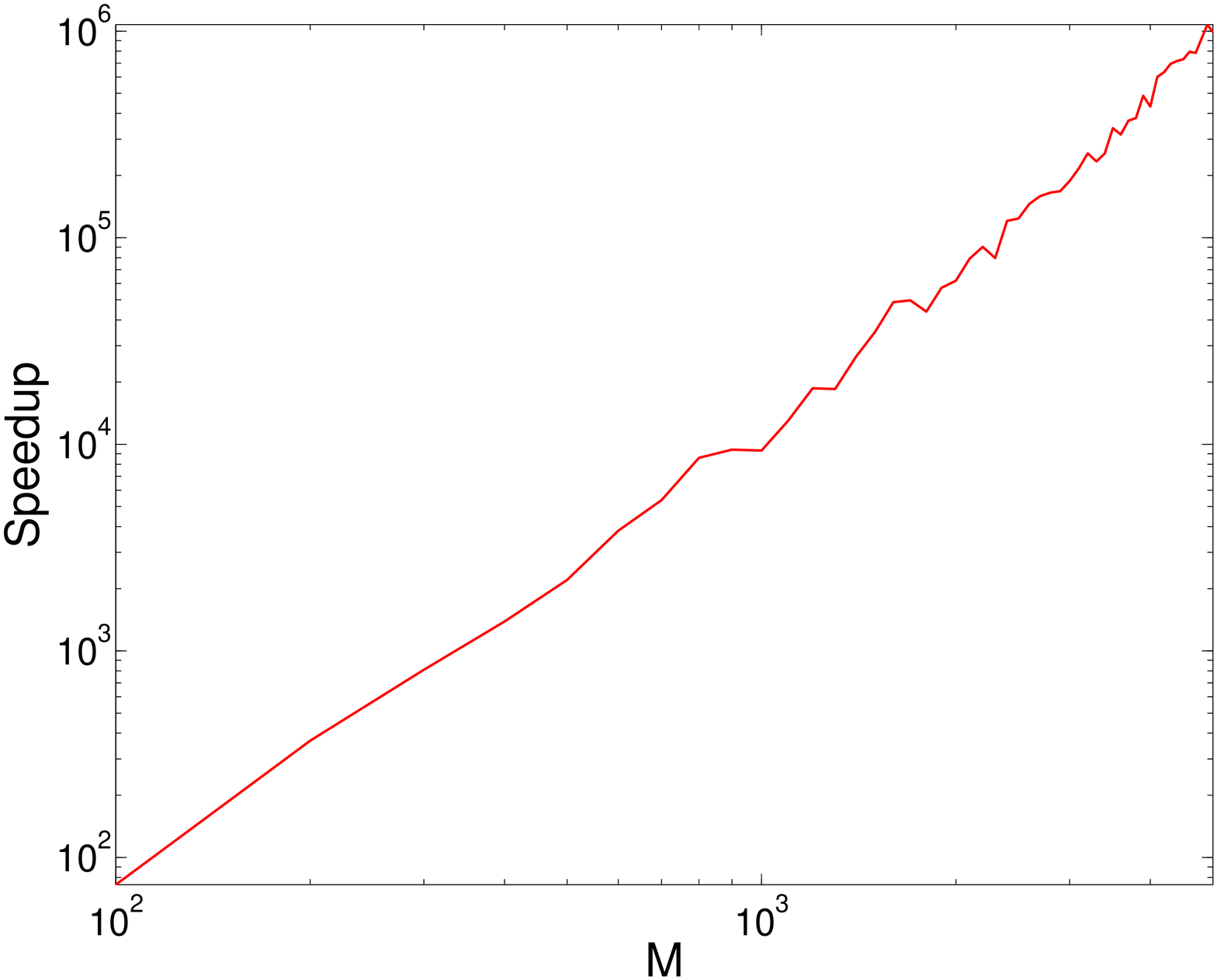}
  \vspace{-10pt}
  \caption{Matrix Inverse Precomputation Speedup of ODC framework prediction as $M$ increases (log-log scale)}
  \label{fig:SpeedUp}
\end{wrapfigure}}

\begin{figure}[h!]
\centering
\includegraphics[width=0.5\textwidth,height=0.35\textwidth]{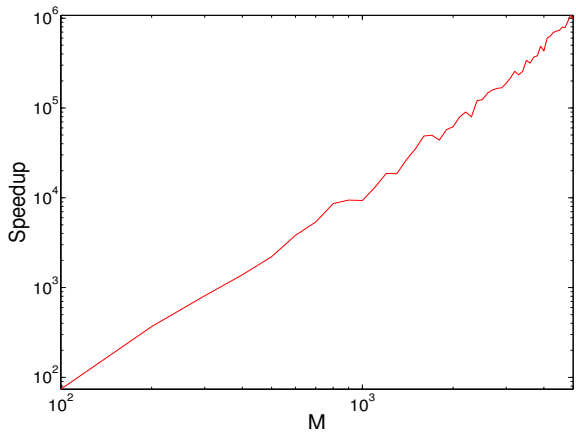}
  \caption{Speedup of ODC framework prediction on either TGP or GPR while retrieving precomputed matrix inverses as $M$ increases, compared with computing them on test time by KNN scheme (log-log scale)}
  \label{fig:SpeedUp}
\end{figure}
Having accomplished the performance analysis which comprehensively interprets our parameters, we used the recommended setting to compare the performance with other methods and show the benefits of this framework. Figure~\ref{fig:SpeedUp}\ignore{start by showing} shows the speedup gained by retrieving the matrix inverses on test time, compared with computing them at test time by NN scheme. The figure shows significant speedup from precomputing local kernel machines.  

Table~\ref{tab:tblRes} shows error, training time and prediction time of NN, FIC, and different variations of ODC  on Poser and Human-Eva datasets. Training time is formatted as ($t_c$ + $t_p$),  where $t_c$ is the clustering time and $t_p$ is the remaining training time excluding  clustering.  As indicated in the top part of table ~\ref{tab:tblRes}, TGP  under our ODC-framwork can significantly speedup the prediction compared with NN-scheme in ~\cite{Bo:2010}, while achieving competitive performance; better in case Poser Dataset. As illustrated in our analysis in Figure~\ref{fig:ODCAnalysis}, higher overlap ($p$) gives better performance. From time analysis perspective, higher $p$ costs more training time due that more subdomains are created and trained.  While, Figure~\ref{fig:ODCAnalysis} and Table~\ref{tab:tblRes} indicates that AB-Ekmeans gives better performance than RPC under both GPR and TGP, AB-Ekmeans takes more time for clustering. Yet,  it is feasible to compute in all the datasets, we used in our experiments. Our experiments also indicate that as $p \to 1$ in TGP and GPR,    $K'=2$ and $K'=3$ takes double and triple the prediction time respectively, compared with $K'=1$, with almost no error reduction\ignore{; see detailed table in the SM}. We also compared our model to FIC in case of GPR, and our model achieved smaller error and smaller prediction time; see bottom part in Table~\ref{tab:tblRes}. However, TGP consistently gives better results on both Poser and HumanEva datasets. We also tried full TGP and GPR on Poser and Human Eva Datasets. Full TGP error is $5.35$ for Poser and $40.3$ for Human Eva. Full GPR error is $6.10$ for Poser and $59.62$ for Human Eva. The results indicate that ODC achieves either better or competitive to the full models. Meanwhie, the speedup is sigbificant for TGP prediction ($21X$ for Human Eva and $11X$  for Poser Datasets); see Fig.~\ref{fig_TGP_speedup_hp}. For GPR prediction, we achieved the best performance and the lowest prediction time compared to existing GPR prediction methods; see Fig.~\ref{fig_GPR_speedup_hp}.

Based on our comprehensive experiments on HumanEva and Poser datasets, we conducted an experiment on Human3.6M  dataset with TGP kernel machine, where $M = 1390$, $t=1$,  $p = 0.6, K'=1$, Ekmeans for clustering. We achieved a speedup of ${41.7X}$ on prediction time using our ODC framework  compared with NN-scheme, i.e., $7$ days if NN-scheme is used versus $4.03$ hours in our case with our MATLAB implementation. The error is $13.5$ (cm) for NN and $13.8$ (cm) for ODC; see Fig.~\ref{fig_TGP_speedup_h36}.

\begin{figure}
\includegraphics[width=0.5\textwidth]{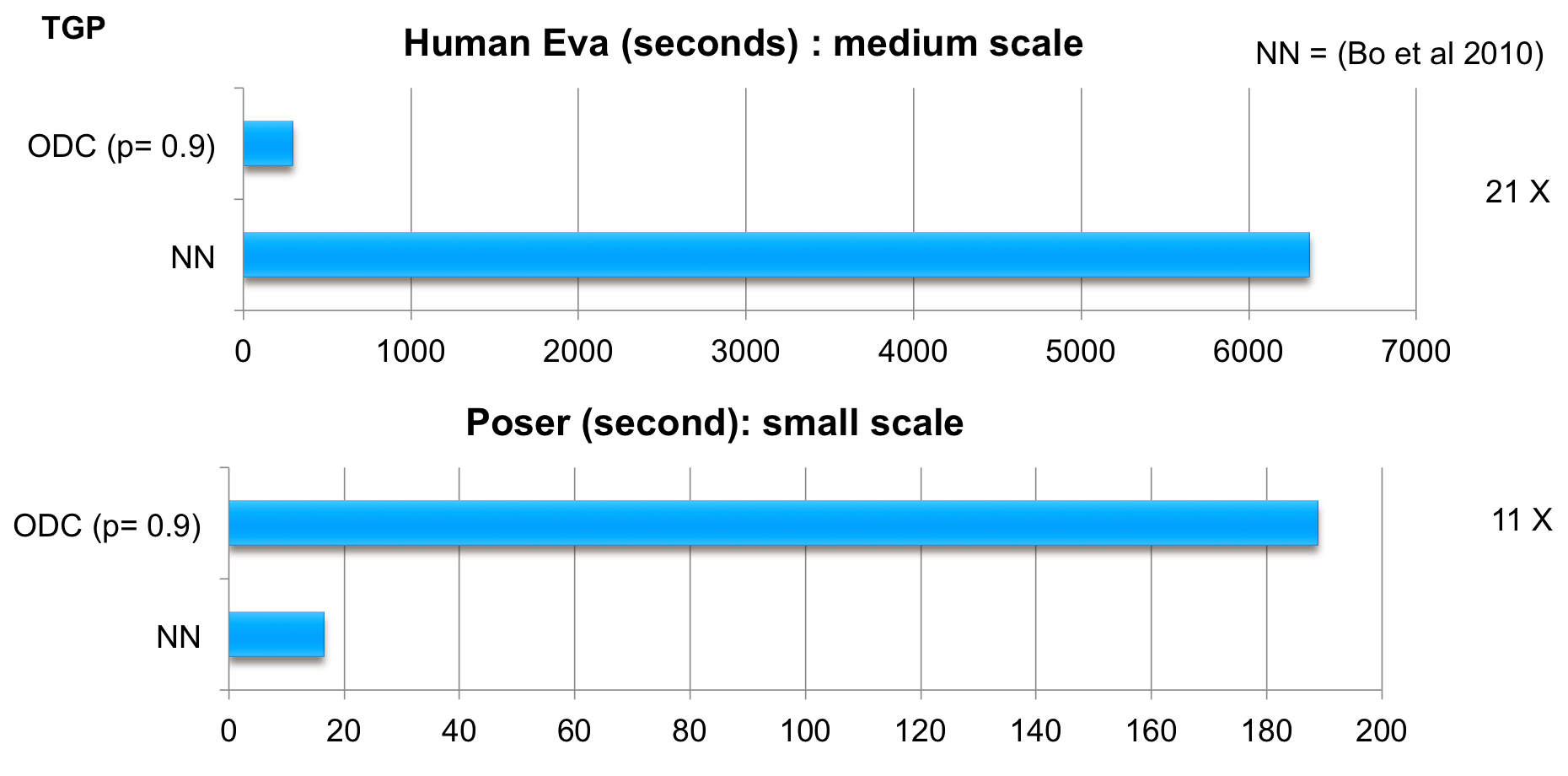}
\caption{TGP Human Eva Dataset (Speed)}
\label{fig_TGP_speedup_hp}
\end{figure}

\begin{figure}
\includegraphics[width=0.5\textwidth]{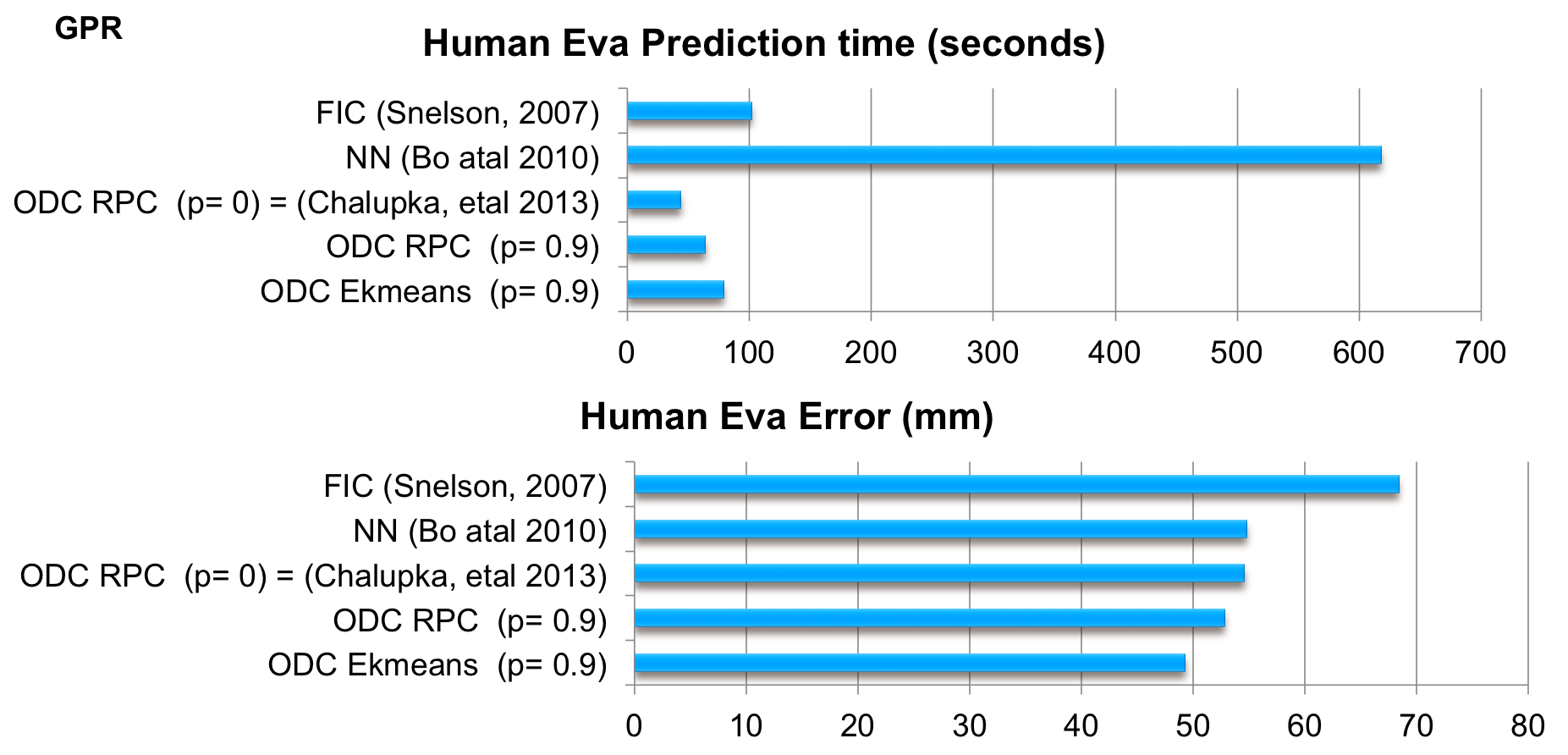}
\caption{GPR speed and error (Human Eva Dataset)}
\label{fig_GPR_speedup_hp}
\end{figure}

\begin{figure}
\includegraphics[width=0.5\textwidth]{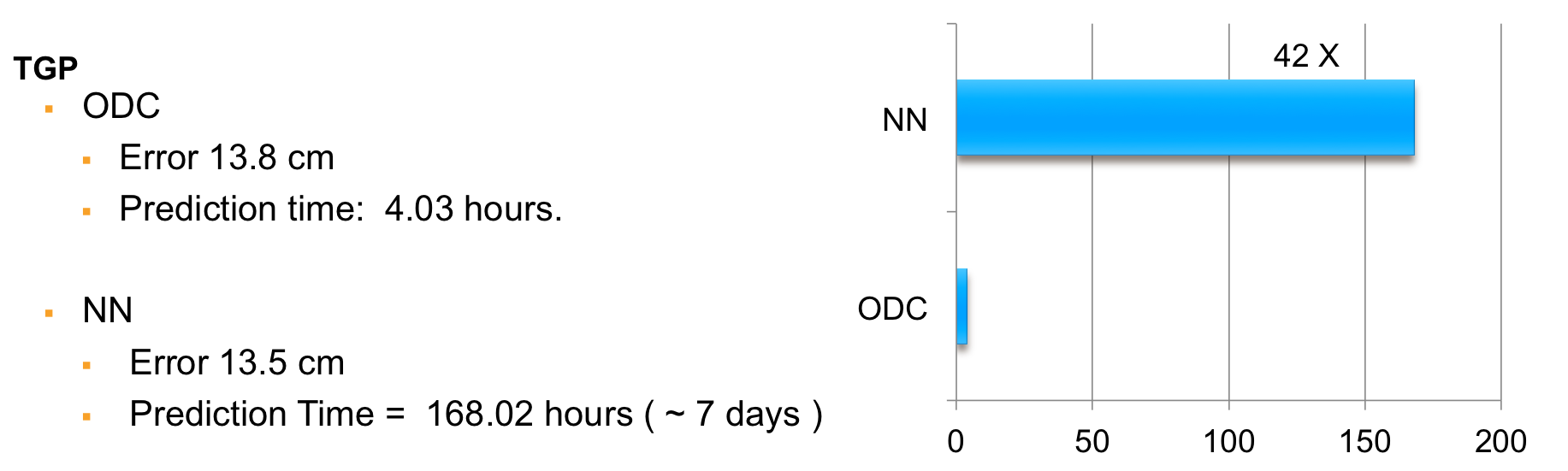}
\caption{TGP speed and error (Human3.6M Dataset)}
\label{fig_TGP_speedup_h36}
\end{figure}

\ignore{
We compare the performance of the proposed methods TGP-ODC and IWTGP-ODC against TGP-KNN \cite{Bo:2010} and IWTGP-KNN \cite{Yamada:2012}, respectively. We report performance on three publicly available datasets: Poser \cite{Trig06}, HumanEva \cite{SigalBB10} and Human3.6M \cite{human3m}. 
\begin{table}[htbp!]
  \centering
      \vspace{4pt}
  \caption{Error and Time for Poser and Human Eva datasets (on 2.6GHZ intel core i7), M = 800}
    \vspace{3pt}
    \scalebox{0.5}{
    \begin{tabular}{|l|l|lll|lll|}
    \toprule
          &   & \textbf{Poser} & \textbf{} &       & \textbf{HumanEva} &       &      \\
    \midrule
         & & \textbf{Error (deg)} & \textbf{Training Time} & \textbf{Prediction Time} & \textbf{Error (mm) } & \textbf{Training Time} & \textbf{Prediction Time} \\
 \textbf{TGP}   & \textbf{NN} & 5.43       &  -     &    188.99 sec   & \textbf{38.1}      &  -     & 6363.823 sec \\
  & \textbf{ODC ($p= 0.9, t=1, K'=1$)-Ekmeans} & \textbf{5.4 }     &      (3.7 +25.1 ) sec  &   \textbf{16.5}  sec  &    \textbf{38.99 }  &  (2001 + 45.4) sec    &  \textbf{298.4} sec\\
    & \textbf{ODC ($p= 0.9, t=1, K'=2$)-Ekmeans} & {5.53}     &      (3.7 +29.46 ) sec  &   47.04  sec  &    {39.2 }  &  (2001 + 45.24) sec    &  569.6946  sec\\
      & \textbf{ODC ($p= 0.9, t=1, K'=3$)-Ekmeans} & {5.4}     &      (3.7 +28.8 ) sec  &   71.4 sec  &    {40.9 }  &  (2001 +  45.7) sec    &  721.0 sec\\
    & \textbf{ODC ($p= 0, t=1, K'=1$)-Ekmeans} &   7.6    &    (3.9 + 1.33) sec   &   14.8 sec  &    41.87   & (240 + 4.9832 ) sec       &  256.7709 \\
       & \textbf{ODC ($p= 0, t=1, K'=2$)-Ekmeans} &   12.3    &    (3.9 + 2.69) sec   &   42.25 sec  &    136.52 & (240 + 4.7790  ) sec       &  514.93 \\
       & \textbf{ODC ($p= 0, t=1, K'=3$)-Ekmeans} &   12.52    &    (3.9 + 1.86) sec   &    72.38 sec  &    187.72   & (240 + 4.7547 ) sec       &  770.9781 \\
      &  \textbf{ODC ($p= 0.9, t=1, K'=1$)-RPC} & 5.6      &      (0.23 +41.6 ) sec  &   15.8 sec  &   39.9  &    ( 0.45 + 49.05) sec     & 277.25 sec\\
         &  \textbf{ODC ($p= 0.9, t=1, K'=2$)-RPC} & 5.52      &      (0.23 +43.80 ) sec  &   43.802 sec  &   40.41  &    ( 0.45 + 46.77) sec     & 677.52 sec\\
        &  \textbf{ODC ($p= 0.9, t=1, K'=3$)-RPC} & 5.59  &      (0.23 +43.05 ) sec  &   67.11 sec  &    41.21&    ( 0.45 + 47.63) sec     &  882.6725 sec\\
  &  \textbf{ODC ($p= 0, t=1, K'=1$)-RPC} &   7.7    &   (0.15 + 1.717) sec   &   13.89 sec  &  42.32    &  (0.19 + 5.2551)      sec &  241.64 sec\\
    &  \textbf{ODC ($p= 0, t=1, K'=2$)-RPC} &   9.29 &   (0.15 + 1.83) sec   &   41.86 sec  &  58.99    &  (0.19 + 5.16)      sec &  475.14 sec\\
      &  \textbf{ODC ($p= 0, t=1, K'=3$)-RPC} &   12.47    &   (0.15 + 1.80) sec   &   66.42 sec  &  136.49    &  (0.19 + 5.20)      sec &  721.49 sec\\
      \hline
  \textbf{GPR}  & \textbf{NN} & 6.77      &   -    &  24 sec     &   54.8    &      - &    617.8  sec \\
   & \textbf{ODC ($p= 0.9, t=1 , K'=1$)-Ekmeans} &  \textbf{6.27}      &  (3.7 +11.1 ) sec  &       \textbf{0.56}  sec & \textbf{49.3}  &   (2001 + 42.85)sec & \textbf{78.85} sec \\
   & \textbf{ODC($p= 0.0, t=1 , K'=1$)-Ekmeans} & 7.54      &   ( 3.9 + 1.38 sec) &    0.35 sec   & 49.6  &  (240 + 6.4) sec  &  48.1 sec\\
  & \textbf{ODC ($p= 0.9, t=1 , K'=1$)-RPC} &  6.45      &  (0.23 +17.3 ) sec  &       0.52  sec & 52.8  & (0.49 + 46.06) sec     &  64.13 sec\\
    & \textbf{ODC ($p= 0.0, t=1 , K'=1$)-RPC  = ~\cite{Chalupka:2013}} &   7.46    &   (0.15 + 1.47) sec &    0.27 sec   & 54.6  &  (0.261 + 4.58 ) sec & 43.52 sec\\
    & \textbf{FIC ~\cite{fic06}} &   7.63 (+/- 0.4)  &   (- + 20.63)   &    0.3106     &   68.36(+/- 0.84)    &  -     & 101.5442 (+/- 1.36) sec\\
    \bottomrule
    \end{tabular}}%
  \label{tab:tblRes}%
\end{table}%
}


\section{Conclusion}
\label{sec:7}
We proposed an efficient ODC framework for kernel machines and validated the framework on structured regression machines on three human pose estimation datasets. The key idea is to equally  partition the data and create cohesive overlapping subdomains, where local kernel machines are computed for each of them. The framework is general and could be applied to various kernel machine beyond GPR, TGP, IWTGP validated in this work. Similar to TGP and IWTGP, our framework could be easily applied to the recently proposed Generalized  TGP~\cite{SMTGP_Elhoseiny15} which is based on Sharma Mittal divergence, a relative entropy measure brought from Physics community. We also theoretically justified  our framework's notion. 

\noindent \textbf{Acknowledgment.} This research was partially funded by NSF award \# 1409683.

\ignore{In this paper, we presented an ODC framework for structured regression kernel machines including GPR and TGP, applied to  3D pose estimation. The proposed model preserves accuracy while achieving a significant speedup in prediction, which is crucial for large scale prediction problems. The approach was evaluated on Poser, Human Eva and Human3.6M datasets. The key idea is to equally  partition the data and create cohesive overlapping subdomains, where local kernel machines are computed for each of them. The final prediction is then computed the predictions of the closest kernel machines. We should some interesting properties of the proposed framework including its applicability to a variety of the state-of the-art kernel methods.}






\bibliographystyle{spmpsci}      
\bibliography{egbib.bib}

\begin{appendices}

\section{IWTGP-ODC Experiments}
Tables ~\ref{tab:poserw} and ~\ref{tab:hevaresw} details the results of IWTGP-ODC experiments on Poser and HumanEva datasets in terms of error and speedup in prediction time.

\begin{table}[htbp]
  \centering
    \scalebox{0.8}
  {
     \begin{tabular}{|c|cc|}
      \hline
    \multirow{2}[4]{*}{\textbf{}} & \textbf{IWTTGP} & \multicolumn{1}{c|}{\textbf{IWTGP-ODC}} \\
     	& ($M = 800, M_{tst} = 418$)  & ($M = 800, M_{tst} = 418$)	\\ \hline
    \textbf{error (deg)} & 6.1 &  5.32 \\
    \textbf{err reduction (deg)} & -     &  \textbf{0.783 }\\
    \textbf{err reduction \%} & -     & \textbf{12.836\%} \\
    \textbf{Prediction Time (sec)} & 360.0  & 26.61 \\
    \textbf{speedUp} & -      & \textbf{13.5} \\ \hline
    \end{tabular}%
    }
  \caption{POSER dataset IWTGP-NN vs  IWTGP-ODC}
 \label{tab:poserw}%
\end{table}%

\begin{table}[htbp]
  \centering
  \scalebox{0.9}
  {
\begin{tabular}{|c|cc|}
\hline
    \multirow{2}[4]{*}{\textbf{}} & \textbf{IWTGP} & \multicolumn{1}{c|}{\textbf{IWTTGP-ODC}} \\
    & ($M = M_{tst} = 800$)& ($M=M_{tst}=800$)\\  \hline
          & \textbf{}  & \textbf{ } \\
    \textbf{error (mm)} & 39.1  & 39.3 \\
    \textbf{err reduction (mm)} & -     & \textbf{\textbf{-0.2}} \\
    \textbf{err reduction \%} & -   & \textbf{\textbf{-0.512\%}} \\
    \textbf{Prediction Time (sec)} & 7938.15 &  569.66 \\
    \textbf{speedUp} & -     &  \textbf{13.92}\\ \hline
    \end{tabular}%
    }
 \caption{Humen Eva dataset: IWTGPKNN vs IWTGP-ODC}
  \label{tab:hevaresw}
\end{table}


\ignore{
\begin{figure}[h!]
\centering
\begin{tabular}{c}
\bmvaHangBox{{  \includegraphics[width=1.0\textwidth,height=0.34\textwidth]{figTGP400HEva2.eps}}}\\
(a) TGP-ODC (M=400) \\
\bmvaHangBox{{  \includegraphics[width=1.0\textwidth,height=0.34\textwidth]{figGPR400HEva2.eps}}}\\
(a) GPR-ODC (M=400) \\
\end{tabular}
\vspace{2mm}
\caption{Overlapping Domain Cover Parameter Analysis of GPR and TGP  on Human Eva Dataset (best seen in color) (M=400)}
\label{fig:ODCAnalysis400}
\end{figure}}

\section{More figures on AB Ekmeans}

 Figure~\ref{fig:ekmeans} shows the clustering performance on 300000 random 2D point (K=5). Figure  ~\ref{fig:hevaVis3} shows the clustering output of our algorithm visualized on using the first three principal components of Human Eva training hog features. The figures shows that the cluster are spatially cohesive but not necessarily circular. This makes the elliptic distribution of the data captured by Mode 3 gives more accuracy membership measure me to the subdomains. 
 
 \ignore{\begin{figure}[h!]
\begin{tabular}{ccc}
\bmvaHangBox{\fbox{\includegraphics[width=4.6cm]{Ekmeans5.png}}}&
\bmvaHangBox{\fbox{\includegraphics[width=5.2cm]{Ekmeans57.png}}}\\
(a)&(b)
\end{tabular}
\caption{Applying our Assign and Balance variant of Kmeans on 300,000 random 2D points:  5 clusters}
\label{fig:ekmeans}
\end{figure}}

 \begin{figure}[h!]
 \centering
\begin{subfigure}[b]{0.5\textwidth}
\includegraphics[width=6.6cm]{Ekmeans5.png}
  \caption{5 clusters}
\end{subfigure}
\caption{Applying our Assign and Balance variant of Kmeans on 300,000 random 2D points}
\label{fig:ekmeans}
\end{figure}
\begin{figure}[h!]
\centering
 \includegraphics[width=0.5\textwidth]{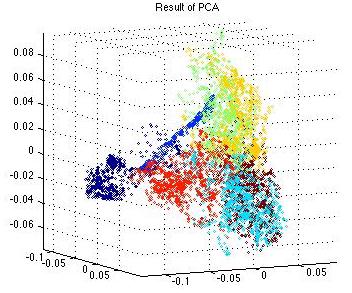}
\caption{Human Eva clustering first three Pricipal Components }
\label{fig:hevaVis3}
\end{figure}


\section{Overlapping Domain Cover(ODC) Generation-Algorithm}

Algorithm ~\ref{alg:sdgen} shows how the overlapping sub-domains are generated form the the equal size clusters from the closest $r$ clusters. \ignore{ if the retrieved nearest neighbor points  belongs to more than $^OC_C$ clusters.}
\begin{algorithm}
{\textbf{Input:} Clusters ${\{C_k\}}_{k=1}^{K} $}
\KwOut{Overlapping subdomains ${\{D_k\}}_{k=1}^{K}$}
\ForEach{Cluster $C_k$}{
Compute the closest $r$ clusters ${\{{{C^{'}}_i}\}}_{i=1}^{r}$ based on $DK_i = \| \mu_k- \mu_i \|$ , $i\neq k$\\
Let $LK_i = 1/DK_i,  {WK_i} =  \frac{LK_i}{\sum_{l=1}^{^OC_C} LK_l}$  ${i=1 : r}$\\
Let ${NPK_i} =  floor(WK_i * OPC)$, ${i=1 : r}$ \\
Let $ExKPts = (1-p) M - \sum_{l=1}^{r} NPK_l$ \\
Let ${NPK_i}$ =  $NPK_i +1$ , $ i=1 : ExKPts $\\
$D_k =  C_k$ \\
Let $overflow = 0$\\
  \Comment{The following for loop goes over the $r$ clusters on an increasing order of $DK_i$ }\\
\For{i=1 : $r$} {
  \If{${NPK_i}$> $|C_i|$} { $overflow = overflow+ {NPK_i} -|C_i|$ \\  $NPK_i = |C_i|$   } 
  \If{${NPK_i}$< $|C_i|$} { $G_i =min(overflow, |C_i| -NPK_i$ ) \\  $NPK_i = NPK_i +G_i $ \\  $overflow =overflow-G_i$   } 
 $Ps_i = KNN({OVC_K}_j,NPK_i )$ 
 \\ $D_k = D_k \cup Ps_i$ 
 }

\For{i=1 : $r$} { $Ps_i = KNN({OVC_K}_j,NPK_i )$ \\ $D_k = D_k \cup Ps_i$ }

\Comment{where KNN is the K-nearest neighbors algorithms. For high performance calculation of $KNN$, we use FLANN \cite{flann09} to calculate $KNN$.}
}
\caption{Subdomains Generation (Note: All ${\{D_k\}}_{k=1}^{K}$ are stored as indices to $X$).  }
\label{alg:sdgen}
\end{algorithm}

%
%
%
%
%

\section{Local Kernel Machines hyper-parameters on each dataset}
The hyper parameters were learnt using cross validation on the training set for  GPR, TGP and IWTGP that we are interested in. The following subsection present the learnt hyper-parameters and the error measures on each dataset in case of TGPs.
\subsection{Poser Dataset}
The parameters $2 \rho_x^2$, $2 \rho_y^2$, $\lambda_X$, and $\lambda_Y$ were assigned to  $5$, $5000$, $10^{-4}$, and $10^{-4}$, respectively. 

\subsection{HumanEva Dataset}

The parameters $2 \rho_x^2$, $2 \rho_y^2$, $\lambda_X$, and $\lambda_Y$ were assigned to  $5$, $500000$, $10^{-3}$, and $10^{-3}$, respectively.

\subsection{Human 3.6 Dataset}

The parameters $2 \rho_x^2$, $2 \rho_y^2$, $\lambda_X$, and $\lambda_Y$ were assigned to  $5$, $500000$, $10^{-3}$, and $10^{-3}$, respectively.


\end{appendices}

\clearpage
\begin{wrapfigure}{l}{0.2\textwidth}
     \includegraphics[width=0.2\textwidth]{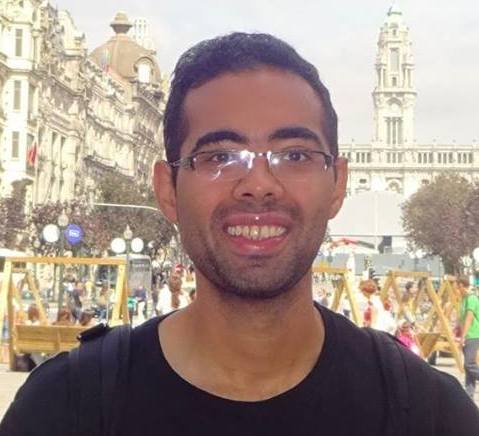}
\end{wrapfigure}
\textbf{Mohamed Elhoseiny } is a PostDoc Researcher at Facebook Research.
His primary research interest is in computer vision, machine learning, intersection between natural language and vision, language guided visual-perception,  and visual reasoning, art \& AI. He received his PhD degree from Rutgers University, New Brunswick, in 2016 under Prof. Ahmed Elgammal. Mohamed received an NSF Fellowship in 2014 for the Write-a-Classifier project (ICCV13), best intern award at SRI International 2014, and the Doctoral Consortium award at CVPR 2016.
\vspace{2mm}
 \begin{wrapfigure}{l}{0.2\textwidth}  
\vspace{-5mm} 
         \includegraphics[width=0.2\textwidth]{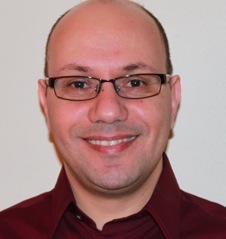}
         \vspace{-10mm} 
\end{wrapfigure}\,\,\;

 \textbf{Ahmed Elgammal} is a professor at the Department of Computer Science, Rutgers, the State University of New Jersey Since Fall 2002. Dr. Elgammal is also a member of the Center for Computational Biomedicine Imaging and Modeling (CBIM). His primary research interest is computer vision and machine learning. His research focus includes human activity recognition, human motion analysis, tracking, human identification, and statistical methods for computer vision. Dr. Elgammal received the National Science Foundation CAREER Award in 2006. Dr. Elgammal has been the Principal Investigator and Co-Principal Investigator of several research projects in the areas of Human Motion Analysis, Gait Analysis, Tracking, Facial Expression Analysis and Scene Modeling; funded by NSF and ONR. Dr. Elgammal is Member of the review committee/board in several of the top conferences and journals in the computer vision field. Dr. Elgammal received his Ph.D. in 2002 from the University of Maryland, College Park. He is a senior IEEE member.

\end{document}